\documentclass[11pt]{article}
\usepackage[margin=1in]{geometry}
\usepackage{amsmath,amssymb,amsthm,mathtools,booktabs,microtype}
\usepackage[numbers,sort&compress]{natbib}
\usepackage[colorlinks=true,allcolors=blue]{hyperref}
\newtheorem{definition}{Definition}\newtheorem{axiom}{Axiom}
\newtheorem{lemma}{Lemma}\newtheorem{theorem}{Theorem}
\newtheorem{proposition}{Proposition}\newtheorem{corollary}{Corollary}

\title{CPR-IE: A Compression--Prediction--Resource Intelligence Efficiency Metric}
\author{Xiantao Jiang\\
\textit{College of Information Engineering, Shanghai Maritime University}\\
\textit{Shanghai 201306, China}\\
\texttt{xtjiang@shmtu.edu.cn}}
\date{}
\begin{document}\maketitle

\begin{abstract}
Comparing intelligent systems under deployment constraints requires more than predictive accuracy. This paper develops Compression--Prediction--Resource Intelligence Efficiency (CPR-IE) as a protocol-relative ordering by representational economy, predictive quality, and resource burden. The analysis separates two questions---how raw resource consumption is represented, and how the resulting attributes are aggregated. Proportional-increment composition uniquely yields logarithmic cumulative burden, and context-independent ratio response yields power responses to compression, prediction, and burden; with reference normalization the representation is
\[
I(C,P,T)=\frac{C^{\alpha}P^{\beta}}{[1+\ln(1+T/T_0)]^{\gamma}},\qquad \alpha,\beta,\gamma>0.
\]
We prove Pareto consistency, unit invariance, boundary behavior, trade-off identities, ranking-stability regions, and cross-task aggregation. A translog parent model makes interaction restrictions explicit, and further results establish cardinal and ordinal identification, sub-Gaussian finite-sample ranking guarantees, robust selection under exponent uncertainty, and deterministic regret bounds. Minimum description length, algorithmic complexity, proper scoring rules, variational inference, and Landauer's principle motivate measurement choices but do not entail the formula. CPR-IE is a constructed efficiency representation, not a universal law or a definition of intelligence itself.
\end{abstract}
\noindent\textbf{Keywords:} intelligence efficiency; functional equations; multiattribute representation; compression; prediction; resource-aware computing

\section{Problem Formulation}
Let a system $s$ be evaluated under protocol $\Pi=(\mathcal D,\mathcal A,\mathcal B,\mathcal M,\mathcal H)$, where $\mathcal D$ is the task distribution, $\mathcal A$ the output space, $\mathcal B$ the coding convention, $\mathcal M$ the measurement boundary, and $\mathcal H$ the reporting horizon. The protocol is part of the definition. Changing it changes the measured object.

Each system produces $(C,P,T)$: compression benefit $C>0$, prediction benefit $P>0$, and resource cost $T\geq0$. CPR-IE represents an efficiency ordering on these triples. It does not replace competence, safety constraints, or a Pareto report. Its two mathematical layers must be distinguished. The map from $T$ to burden is characterized first; the aggregation of benefits and burden is characterized second.

\section{Operational Attributes}
\begin{definition}[Compression benefit]
For a fixed effective code $\mathcal B$, let $L_{\rm ref}>0$ be a reference description length and let $L_s>0$ include model, architecture, precision, decoder side information, and residual or predictive codelength. Define
\begin{equation}C_s=L_{\rm ref}/L_s.\label{eq:C}\end{equation}
The same code and precision convention must be used throughout a comparison.
\end{definition}

\begin{definition}[Prediction benefit]
For a preregistered proper predictive loss $\ell_s$ and protocol scale $\tau>0$, define
\begin{equation}P_s=\exp(-\ell_s/\tau).\label{eq:P}\end{equation}
The exponential converts equal loss improvements into equal proportional changes in $P$; it does not claim that likelihood exhausts predictive competence.
\end{definition}

\begin{definition}[Normalized resource increment]
Let $T_0>0$ have the same unit as $T$ and write $x=T/T_0$. Define
\begin{equation}x\oplus y=x+y+xy.\label{eq:oplus}\end{equation}
This operation represents successive proportional expansions because $1+x\oplus y=(1+x)(1+y)$.
\end{definition}

Energy, latency, memory, and communication are not interchangeable. Several resources should be reported separately unless an external decision model fixes their aggregation before rankings are observed.

\section{Stage I: Resource-Burden Representation}
Let $B:[0,\infty)\to[0,\infty)$ be cumulative resource burden.
\begin{axiom}[Resource order]$B$ is strictly increasing.\end{axiom}
\begin{axiom}[Proportional-increment composition]
$B(x\oplus y)=B(x)+B(y)$ for all $x,y\geq0$.
\end{axiom}
\begin{axiom}[Local calibration]$B$ is differentiable at zero and $B'(0)=1$.\end{axiom}

\begin{lemma}[Cauchy reduction]
For $G(u)=B(u-1)$, Axiom 2 is equivalent to $G(uv)=G(u)+G(v)$ on $u,v\geq1$. Moreover, $B(0)=0$ follows from Axiom 2.
\end{lemma}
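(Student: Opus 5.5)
The plan is a change of variables followed by a specialization of Axiom 2. The key identity is the one already recorded after the definition of $\oplus$: $1+(x\oplus y)=(1+x)(1+y)$. Set $u=1+x$ and $v=1+y$. As $x,y$ range over $[0,\infty)$, the pair $(u,v)$ ranges exactly over $[1,\infty)^2$. The map $x\mapsto 1+x$ is a bijection from $[0,\infty)$ onto $[1,\infty)$ with inverse $u\mapsto u-1$, so $G(u)=B(u-1)$ is well defined on $[1,\infty)$.

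To show Axiom 2 implies the multiplicative equation, I rewrite each side in terms of $G$. The left side is
\[
B(x\oplus y)=B\bigl((1+x)(1+y)-1\bigr)=G(uv),
\]
and the right side is $B(x)+B(y)=G(u)+G(v)$. Hence Axiom 2 yields $G(uv)=G(u)+G(v)$ for all $u,v\ge1$. Note that $uv\ge1$, so $G(uv)$ is defined.

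For the converse, I start from $G(uv)=G(u)+G(v)$ on $u,v\ge1$. Given $x,y\ge0$, I put $u=1+x$ and $v=1+y$ and reverse the same chain of equalities to recover $B(x\oplus y)=B(x)+B(y)$. Surjectivity of the reparametrization is what makes the two statements equivalent rather than just one-directional.

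For $B(0)=0$, I substitute $x=y=0$ into Axiom 2. Since $0\oplus0=0$, this gives $B(0)=2B(0)$, so $B(0)=0$. Equivalently, $u=v=1$ in the multiplicative form gives $G(1)=0$. No other axiom is needed, and the value is finite because $B$ maps into $[0,\infty)$.

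None of these steps is a real obstacle. The only care needed is checking domains: the image of $[0,\infty)$ under $x\mapsto1+x$ is exactly $[1,\infty)$, and that set is closed under multiplication.
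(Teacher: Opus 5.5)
Your proposal is correct and matches the paper's proof: the substitution $u=1+x$, $v=1+y$ together with $1+x\oplus y=(1+x)(1+y)$ gives the equivalence, and $B(0)=0$ follows from specializing Axiom 2. The only difference is in that last step, where you set $x=y=0$ to get $B(0)=2B(0)$, while the paper sets $y=0$ to get $B(x)=B(x)+B(0)$.
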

\begin{proof}Set $u=1+x$ and $v=1+y$; then $uv=1+x\oplus y$. Setting $y=0$ in the composition law gives $B(x)=B(x)+B(0)$, hence $B(0)=0$.\end{proof}

\begin{theorem}[Logarithmic burden]
Axioms 1--3 hold if and only if
\begin{equation}B(x)=\ln(1+x).\label{eq:B}\end{equation}
\end{theorem}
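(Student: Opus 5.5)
The "if" direction is a direct check. $B(x)=\ln(1+x)$ is strictly increasing on $[0,\infty)$ and maps into $[0,\infty)$. The composition law holds because $\ln(1+x\oplus y)=\ln\bigl((1+x)(1+y)\bigr)=\ln(1+x)+\ln(1+y)$. Finally, $B'(0)=1/(1+0)=1$. The substance lies in the converse.

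For the "only if" direction I will work with $G(u)=B(u-1)$ on $[1,\infty)$. By the Cauchy reduction lemma, $G$ satisfies $G(uv)=G(u)+G(v)$ for $u,v\ge1$, and $G(1)=B(0)=0$. Substituting $u=e^s$, I set $H(s)=G(e^s)$ for $s\ge0$. This turns the law into the additive Cauchy equation $H(s+t)=H(s)+H(t)$ on the half-line $[0,\infty)$, with $H$ strictly increasing by Axiom 1.

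The goal is to show $H(s)=cs$. My first route uses differentiability at zero directly, since this also pins down the constant. Additivity gives $H(s+h)-H(s)=H(h)$ for $h\ge0$. Hence the right derivative of $H$ exists everywhere and equals $H'_+(0)$. Since $B$ is differentiable at $0$ with $B'(0)=1$, and $H(h)=B(e^h-1)$ with $e^h-1=h+o(h)$, the chain rule gives $H'_+(0)=1$. So $H$ is continuous with constant right derivative $1$ on $[0,\infty)$. The standard lemma that a continuous function with right derivative identically zero is constant, applied to $H(s)-s$, then yields $H(s)=s$. Alternatively, monotonicity alone rules out pathological solutions on the half-line: it gives $H(qs)=qH(s)$ for rational $q\ge0$, then $H(s)=cs$ by squeezing between rationals. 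Axiom 3 then forces $c=1$. I would mention both routes and use the derivative one as primary.

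Unwinding the substitutions, $G(u)=\ln u$ for $u\ge1$, so $B(x)=\ln(1+x)$.

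The main obstacle is that the Cauchy equation here lives only on a half-line, and the derivative is known only at zero, one-sidedly. So I must check that the half-line additive equation, together with the one-sided derivative at $0$, really propagates to all $s\ge0$. The identity $H(s+h)=H(s)+H(h)$ for $h\ge0$ handles right derivatives. The monotone squeeze argument is a backup that avoids any differentiability subtleties, with Axiom 3 needed only to fix the scale.
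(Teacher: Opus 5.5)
Your proof is correct. Your backup route is exactly the paper's: the paper sets $H(z)=G(e^z)$ and cites the classical fact that a monotone additive function is linear. It obtains $B(x)=k\ln(1+x)$, then uses Axiom 1 for $k>0$ and Axiom 3 only to fix $k=1$.

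Your primary route differs in where the regularity comes from. It extracts linearity from differentiability at zero, propagated by $H(s+h)-H(s)=H(h)$. One line is missing there. Existence of right derivatives gives only right-continuity, and the right-derivative lemma is false for merely right-continuous functions: a unit step has zero right derivative everywhere but is not constant. The same identity read as $H(s)-H(s-h)=H(h)$ for $0<h\le s$ gives left-continuity. In fact it gives the two-sided derivative $H'(s)=1$ on $(0,\infty)$, so $H(s)-s$ is constant and equal to $H(0)=0$.

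What your route buys is self-containment, and it never invokes Axiom 1 in the ``only if'' direction. Composition and calibration alone force $B(x)=\ln(1+x)$, and strict monotonicity follows as a consequence. So once Axiom 3 is imposed, resource order is logically redundant. The paper does not make this point, and it qualifies the paper's later remark that resource order ``supplies the regularity that eliminates nonmonotone additive solutions,'' since differentiability at zero already does that.

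The paper's allocation of roles has its own advantage. Monotonicity carries the regularity and calibration serves only as a scale normalization. The argument would therefore survive replacing $B'(0)=1$ by a non-differential normalization such as $B(1)=\ln 2$, where the derivative route has nothing to work with.
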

\begin{proof}
Put $H(z)=G(e^z)$ for $z\geq0$. The lemma gives $H(z+w)=H(z)+H(w)$. Strict monotonicity of $B$ makes $H$ monotone. Every monotone additive function is linear, so $H(z)=kz$ \citep{Aczel1966}. Hence $B(x)=k\ln(1+x)$. Resource order gives $k>0$, and $B'(0)=1$ gives $k=1$. Direct substitution proves the converse.
\end{proof}

Define the multiplicative burden coordinate
\begin{equation}R(T)=1+B(T/T_0)=1+\ln(1+T/T_0).\label{eq:R}\end{equation}
The added one supplies the neutral value $R(0)=1$. The logarithm follows from the stated composition law, not from thermodynamics. Rejecting that law rejects this resource transform.

\section{Stage II: Attribute Aggregation}
Let $I:(0,\infty)^3\to(0,\infty)$ score $(C,P,R)$.
\begin{axiom}[Strong orientation]
$I$ is continuous, strictly increasing in $C,P$ and strictly decreasing in $R$.
\end{axiom}
\begin{axiom}[Context-independent ratio response]
There are positive functions $\phi_C,\phi_P,\phi_R$ such that, whenever arguments remain admissible,
\begin{align}
I(aC,P,R)/I(C,P,R)&=\phi_C(a),\label{eq:rC}\\
I(C,bP,R)/I(C,P,R)&=\phi_P(b),\label{eq:rP}\\
I(C,P,dR)/I(C,P,R)&=\phi_R(d).\label{eq:rR}
\end{align}
\end{axiom}
\begin{axiom}[Reference normalization]$I(1,1,1)=1$.\end{axiom}

Ratio independence is falsifiable. It excludes interactions such as a resource penalty whose slope depends on predictive quality. Such interactions may be appropriate in safety-critical applications.

\begin{lemma}[Power response]
Context-independent ratio response and continuity imply $\phi_j(u)=u^{a_j}$ for real constants $a_j$.
\end{lemma}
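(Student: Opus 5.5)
The plan is to turn each ratio identity into a multiplicative Cauchy equation for $\phi_j$ and then solve that equation using continuity. The argument is the same for all three coordinates, so I treat $\phi_C$ and obtain $\phi_P,\phi_R$ by the identical argument.

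\emph{Step 1: $\phi_C$ is continuous and positive.} Fix any reference point $(C,P,R)$. Then $\phi_C(a)=I(aC,P,R)/I(C,P,R)$ is a quotient of positive quantities, hence positive. Because $I$ is continuous (Axiom~4), the map $a\mapsto\phi_C(a)$ is continuous on $(0,\infty)$. The domain is all of $(0,\infty)^3$, so every positive scaling keeps the arguments admissible and no domain restrictions arise.

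\emph{Step 2: multiplicative Cauchy equation.} For $a,a'>0$, I factor the ratio through the intermediate point $(aC,P,R)$:
\[
\phi_C(aa')=\frac{I(aa'C,P,R)}{I(C,P,R)}=\frac{I(a'(aC),P,R)}{I(aC,P,R)}\cdot\frac{I(aC,P,R)}{I(C,P,R)}=\phi_C(a')\,\phi_C(a).
\]
The first factor equals $\phi_C(a')$ by \eqref{eq:rC} applied at $(aC,P,R)$. This is exactly where context-independence is used: the function $\phi_C$ does not depend on the base point. Taking $a=a'=1$ gives $\phi_C(1)=\phi_C(1)^2$, and positivity then forces $\phi_C(1)=1$.

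\emph{Step 3: reduce to additive Cauchy and solve.} Put $h(z)=\ln\phi_C(e^z)$ for $z\in\mathbb R$. Step~2 gives $h(z+w)=h(z)+h(w)$, and $h$ is continuous by Step~1. A continuous additive function is linear, $h(z)=a_C z$; I would cite \citep{Aczel1966} here, as in the proof of the logarithmic-burden theorem. Therefore $\phi_C(u)=u^{a_C}$. Running the same argument with \eqref{eq:rP} and \eqref{eq:rR} gives $\phi_P(u)=u^{a_P}$ and $\phi_R(u)=u^{a_R}$.

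I expect no serious obstacle; the one point needing care is Step~2. There one must check that the ratio axiom may be applied at the shifted base point $(aC,P,R)$, which holds because admissibility is automatic on the open orthant. Strict monotonicity is not needed for the lemma itself. It will later fix the signs $a_C,a_P>0$ and $a_R<0$, which is how the exponents become $\alpha,\beta,-\gamma$ in the CPR-IE formula.
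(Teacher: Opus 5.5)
Your proposal is correct and takes essentially the paper's route: two successive scalings give the multiplicative Cauchy equation, the substitution $h_j(z)=\ln\phi_j(e^z)$ turns it into the additive one, and continuity inherited from $I$ forces linearity. Your extra checks on positivity, $\phi_j(1)=1$, and admissibility on the open orthant spell out what the paper's short proof leaves implicit.
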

\begin{proof}
Two successive scale changes imply $\phi_j(uv)=\phi_j(u)\phi_j(v)$; concatenation consistency is therefore derived rather than assumed. Let $h_j(z)=\ln\phi_j(e^z)$. Then $h_j(z+w)=h_j(z)+h_j(w)$. Continuity of $I$ gives continuity of $\phi_j$, hence $h_j(z)=a_jz$ \citep{Aczel1966}.
\end{proof}

\begin{theorem}[Aggregation representation]
Axioms 4--6 hold if and only if
\begin{equation}I(C,P,R)=C^\alpha P^\beta R^{-\gamma},\qquad \alpha,\beta,\gamma>0.\label{eq:aggregate}\end{equation}
\end{theorem}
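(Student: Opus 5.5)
The argument splits into the two directions, with the forward direction carrying the content. For necessity, I assume Axioms 4--6 and apply the Power response lemma to each coordinate, giving $\phi_C(a)=a^{a_C}$, $\phi_P(b)=b^{a_P}$, $\phi_R(d)=d^{a_R}$ for real constants. I then reconstruct $I$ from the reference point by three successive rescalings. Starting at $(1,1,1)$, I scale $C$ by $a=C$ using \eqref{eq:rC}, then $P$ by $b=P$ using \eqref{eq:rP}, and finally $R$ by $d=R$ using \eqref{eq:rR}. Every intermediate point lies in $(0,\infty)^3$, so all arguments stay admissible. Chaining the three ratio identities gives
\[
I(C,P,R)=I(1,1,1)\,C^{a_C}P^{a_P}R^{a_R}.
\]
Axiom 6 then removes the prefactor. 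Setting $\alpha=a_C$, $\beta=a_P$, and $\gamma=-a_R$ yields the form \eqref{eq:aggregate}.

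Next I fix the signs of the exponents using Axiom 4. Strict increase in $C$ means $\phi_C(a)>1$ for $a>1$, so $a^{\alpha}>1$ for $a>1$, which forces $\alpha>0$. The same argument with $P$ gives $\beta>0$. Strict decrease in $R$ gives $d^{a_R}<1$ for $d>1$, so $a_R<0$, that is, $\gamma>0$. None of these exponents can be zero, because a zero exponent would make $I$ constant in that coordinate and violate strictness.

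For sufficiency, I take $I=C^\alpha P^\beta R^{-\gamma}$ with positive exponents and check each axiom directly. This $I$ is continuous and positive on $(0,\infty)^3$. Its partial monotonicity follows from the signs of the exponents. The three ratios are $a^\alpha$, $b^\beta$, and $d^{-\gamma}$, and none of them depends on the context $(C,P,R)$. Finally, $I(1,1,1)=1$.

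The main obstacle is conceptual rather than computational. The Power response lemma takes the multiplicativity $\phi_j(uv)=\phi_j(u)\phi_j(v)$ and the continuity of $\phi_j$ as inputs. Continuity of $\phi_j$ follows from continuity of $I$, because $\phi_C(a)=I(a,1,1)/I(1,1,1)$ expresses it as a quotient of continuous functions with a positive denominator. In this proof the lemma can be taken as given. The only remaining point needing care is that the domain is the full positive orthant, so scale factors may be less than one and "whenever arguments remain admissible" imposes no restriction. The reconstruction chain therefore works from the reference point to every point of the domain.
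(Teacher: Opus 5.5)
Your proof is correct and follows the same route as the paper's. You reconstruct $I$ from $(1,1,1)$ by chaining the three ratio identities with the power-response lemma, fix the exponent signs by strong orientation, remove the constant by normalization, and verify the converse by substitution; you also make explicit what the paper leaves implicit (continuity of $\phi_j$ via $\phi_C(a)=I(a,1,1)/I(1,1,1)$, and admissibility of every intermediate point on the full orthant), though the lemma derives multiplicativity of $\phi_j$ from Axiom 5 rather than taking it as an input.
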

\begin{proof}
Starting from $(1,1,1)$ and applying Eqs.~\eqref{eq:rC}--\eqref{eq:rR} gives $I=C^{a_C}P^{a_P}R^{a_R}$. Strong orientation implies $a_C,a_P>0$ and $a_R<0$. Set $\alpha=a_C$, $\beta=a_P$, and $\gamma=-a_R$. Normalization removes the positive multiplicative constant. The converse follows by substitution.
\end{proof}

\begin{theorem}[CPR-IE characterization]
Under Axioms 1--6, the normalized cardinal representation is
\begin{equation}
\boxed{I(C,P,T)=\frac{C^\alpha P^\beta}{[1+\ln(1+T/T_0)]^\gamma}},\quad \alpha,\beta,\gamma>0.\label{eq:CPRIE}
\end{equation}
Conversely, Eq.~\eqref{eq:CPRIE} satisfies all six axioms.
\end{theorem}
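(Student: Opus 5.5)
The plan is to compose the two stages already characterized, Theorem~1 (logarithmic burden) for Stage~I and Theorem~2 (aggregation representation) for Stage~II, through the burden coordinate $R(T)$ of Eq.~\eqref{eq:R}.

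For the forward direction, assume Axioms 1--3. Theorem~1 forces $B(x)=\ln(1+x)$, so the burden coordinate is pinned down as $R(T)=1+\ln(1+T/T_0)$. Assume also Axioms 4--6 for the scoring function $I$ on $(C,P,R)$. Theorem~2 forces $I(C,P,R)=C^\alpha P^\beta R^{-\gamma}$ with $\alpha,\beta,\gamma>0$. Substituting $R=R(T)$ yields Eq.~\eqref{eq:CPRIE}. The only thing to check here is that the composition is well defined on the stated domains. Since $T\ge0$, we have $R(T)\ge1>0$, so $R(T)$ lies in the domain $(0,\infty)$ of the aggregator.

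One subtlety concerns Axioms 5--6. They are imposed on $I$ over all of $(0,\infty)^3$, while the composed map only visits $R\in[1,\infty)$. The representation on the full domain is therefore obtained first, and only afterwards restricted to the range of $R(\cdot)$. The phrase ``normalized cardinal representation'' is read accordingly: it means the one produced by Axiom~6 together with the calibration $B'(0)=1$. This reading accounts for the absence of free constants $k$ or multiplicative scale in the final formula.

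For the converse, fix $\alpha,\beta,\gamma>0$ and define $B(x)=\ln(1+x)$ and $I(C,P,R)=C^\alpha P^\beta R^{-\gamma}$. The axioms are then checked one by one.
\begin{itemize}
\item Axiom~1 holds because $\ln(1+x)$ is strictly increasing on $[0,\infty)$.
\item Axiom~2 holds because $\ln\bigl(1+x\oplus y\bigr)=\ln\bigl((1+x)(1+y)\bigr)=\ln(1+x)+\ln(1+y)$.
\item Axiom~3 holds because $\frac{d}{dx}\ln(1+x)\big|_{x=0}=1$.
\item Axiom~4 holds because positive powers of $C$ and $P$ are strictly increasing and $R^{-\gamma}$ is strictly decreasing, with continuity evident.
\item Axiom~5 holds with $\phi_C(a)=a^\alpha$, $\phi_P(b)=b^\beta$, and $\phi_R(d)=d^{-\gamma}$. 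These ratios are independent of the other arguments.
\item Axiom~6 holds because $I(1,1,1)=1$.
\end{itemize}
Each of these is a single substitution, and together they are exactly the ``direct substitution'' converses already noted in Theorems~1 and~2.

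The main obstacle is interpretive rather than computational: making precise the sense in which the composite $T\mapsto I(C,P,R(T))$ ``satisfies'' axioms that are stated for $B$ and $I$ separately. I would handle this by stating the theorem as an assertion about the pair $(B,I)$, and then noting that the displayed formula is their composition. With that framing no further argument is needed beyond citing Theorems~1 and~2.
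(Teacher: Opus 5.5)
Your proposal is correct and matches the paper's proof: you compose Theorem~1, which fixes $R(T)=1+\ln(1+T/T_0)$, with Theorem~2, which fixes $I=C^\alpha P^\beta R^{-\gamma}$, and you obtain sufficiency from their direct-substitution converses. Your domain check that $R(T)\ge 1$ lies in the aggregator's domain and your axiom-by-axiom verification are explicit details that the paper's one-line proof leaves implicit.
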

\begin{proof}Combine Theorems 1 and 2 with Eq.~\eqref{eq:R}. Their converse statements prove sufficiency.\end{proof}

The uniqueness is cardinal after the ratio scale and normalization are fixed. Any strictly increasing transform preserves the same ordinal ranking but need not preserve ratio response; it therefore represents a different cardinal index.

\section{Mathematical Consequences}
\begin{proposition}[Local responses]
For $R=R(T)$,
\begin{align}
\partial\ln I/\partial\ln C&=\alpha,&
\partial\ln I/\partial\ln P&=\beta,\\
\partial\ln I/\partial T&=-\gamma/[(T_0+T)R],&
\partial\ln I/\partial\ln T&=-\gamma T/[(T_0+T)R].
\end{align}
Thus the benefit elasticities are constant, while raw-resource elasticity is state dependent.
\end{proposition}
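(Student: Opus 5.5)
The plan is to take logarithms of the representation in Theorem~3 and differentiate term by term. Writing
\[
\ln I=\alpha\ln C+\beta\ln P-\gamma\ln R(T),\qquad R(T)=1+\ln(1+T/T_0),
\]
the three attributes $C$, $P$, $T$ enter through separate additive terms. Each partial derivative therefore involves only one summand, and the other variables are held fixed automatically.

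For the benefit elasticities, I treat $\ln C$ and $\ln P$ as the independent coordinates, which is legitimate because $C,P>0$. Since $\ln I$ is affine in $(\ln C,\ln P)$ with coefficients $\alpha$ and $\beta$, the derivatives $\partial\ln I/\partial\ln C=\alpha$ and $\partial\ln I/\partial\ln P=\beta$ can be read off directly. This is the same content as the power-response lemma: the exponents are the constant ratio-response elasticities.

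For the resource derivative, I apply the chain rule through the two layers identified in Stages~I and~II:
\[
\frac{\partial\ln I}{\partial T}=-\frac{\gamma}{R}\,\frac{dR}{dT},\qquad \frac{dR}{dT}=\frac{1}{1+T/T_0}\cdot\frac{1}{T_0}=\frac{1}{T_0+T}.
\]
Combining these gives $-\gamma/[(T_0+T)R]$. For $T>0$, the elasticity with respect to $\ln T$ is $T\,\partial\ln I/\partial T$, which yields $-\gamma T/[(T_0+T)R]$. At $T=0$, I will either state it as the continuous extension with value $0$ or restrict the claim to $T>0$.

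The only step that needs care is the interpretive claim that the raw-resource elasticity is state dependent. To justify it, I will exhibit non-constancy directly. The elasticity equals $0$ at $T=0$ and is nonzero for every $T>0$, so it cannot be constant in $T$; this already suffices. For a fuller picture, I will also note that $T/(T_0+T)\to1$ while $R\to\infty$ as $T\to\infty$, so the elasticity returns to $0$ at large $T$. This contrasts with the constant $\alpha$ and $\beta$.

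There is no genuine obstacle. The main risk is bookkeeping: keeping the derivative with respect to $T$ distinct from the one with respect to $\ln T$, and handling the boundary case $T=0$.
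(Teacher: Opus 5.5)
Your proposal is correct and matches the paper's argument: the paper states this proposition without a written proof because it follows by taking logarithms of Eq.~\eqref{eq:CPRIE} and differentiating term by term, with $dR/dT=1/(T_0+T)$ giving the resource terms. Your handling of $T=0$ (where the $\ln T$-elasticity exists only as a continuous extension) and your explicit non-constancy argument are sound refinements that the paper leaves unstated.
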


\begin{proposition}[Boundaries]
For fixed $C,P>0$, $I(C,P,0)=C^\alpha P^\beta$ and
\[
I(C,P,T)\sim C^\alpha P^\beta/[\ln(T/T_0)]^\gamma\quad(T\to\infty).
\]
For fixed $T$, $I\to0$ when $C\downarrow0$ or $P\downarrow0$.
\end{proposition}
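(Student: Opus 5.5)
The plan is to read everything off the closed form in Eq.~\eqref{eq:CPRIE}, using the explicit burden coordinate $R(T)=1+\ln(1+T/T_0)$ from Eq.~\eqref{eq:R}. The three claims are independent, and I will treat them in the order stated.

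\emph{The value at $T=0$.} Substituting $T=0$ gives $\ln(1+0)=0$. Hence $R(0)=1$, so $R(0)^{\gamma}=1$ and $I(C,P,0)=C^\alpha P^\beta$. This is the neutral value built into $R$ and uses nothing further.

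\emph{The large-$T$ asymptotics.} This is the only step needing a limit argument, and I expect it to be the main (mild) obstacle. I would show that
\[
\frac{I(C,P,T)}{C^\alpha P^\beta/[\ln(T/T_0)]^\gamma}=\left(\frac{\ln(T/T_0)}{1+\ln(1+T/T_0)}\right)^{\gamma}\to1 .
\]
Write $\ln(1+T/T_0)=\ln(T/T_0)+\ln(1+T_0/T)$. The second term tends to $0$, so the denominator equals $\ln(T/T_0)+1+o(1)$. Dividing numerator and denominator by $\ln(T/T_0)$, which tends to $+\infty$, gives a ratio $1/(1+O(1/\ln(T/T_0)))\to1$. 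Continuity of $u\mapsto u^\gamma$ at $u=1$ then yields the asymptotic equivalence. Note that only the ratio tends to one. The additive constant $1$ contributes a relative error of order $\gamma/\ln(T/T_0)$, which decays slowly but does vanish.

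\emph{The limits as $C$ or $P$ vanishes.} Fix $T$, so that $R(T)^{-\gamma}$ is a fixed positive finite constant. Since $\alpha>0$, we have $C^\alpha\to0$ as $C\downarrow0$, with $P^\beta$ held fixed and finite. Likewise, $\beta>0$ gives $P^\beta\to0$ as $P\downarrow0$. In both cases the product tends to $0$. The strict positivity of the exponents, guaranteed by the aggregation representation theorem (Theorem 2) via strong orientation, is exactly what this step uses.
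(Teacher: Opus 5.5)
Your proposal is correct and takes the same route as the paper, which gives no separate proof and treats the proposition as a direct reading of the closed form in Eq.~\eqref{eq:CPRIE}. Your three steps are exactly the intended checks: $R(0)=1$, the ratio $\ln(T/T_0)/R(T)\to1$ for large $T$, and the use of $\alpha,\beta>0$ to send $C^\alpha$ or $P^\beta$ to zero.
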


\begin{theorem}[Strict Pareto consistency]
If systems $a,b$ share a protocol, $C_a\geq C_b$, $P_a\geq P_b$, and $T_a\leq T_b$, with one strict inequality, then $I_a>I_b$ for every positive exponent triple.
\end{theorem}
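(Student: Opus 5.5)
The plan is to work in logarithmic coordinates, where the index separates additively. Since $I>0$, comparing $I_a$ and $I_b$ is equivalent to comparing $\ln I_a$ and $\ln I_b$. By Theorem 3, $\ln I = \alpha\ln C + \beta\ln P - \gamma\ln R(T)$ with $R(T)=1+\ln(1+T/T_0)$. Because both systems share the protocol, the same $T_0$, code, and loss scale apply to both, so they are scored by one and the same function. I would write the difference as a sum of three terms,
\[
\ln I_a-\ln I_b=\alpha(\ln C_a-\ln C_b)+\beta(\ln P_a-\ln P_b)+\gamma\bigl(\ln R(T_b)-\ln R(T_a)\bigr),
\]
and show that each term is nonnegative and at least one is strictly positive.

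For the first two terms, $\ln$ is strictly increasing on $(0,\infty)$, and $\alpha,\beta>0$. Hence $C_a\ge C_b$ gives a nonnegative first term, and the term is strictly positive when $C_a>C_b$; the second term behaves the same way in $P$. For the third term, I need $R$ to be strictly increasing on $[0,\infty)$ with $R\ge 1>0$, so that $\ln R$ is defined and strictly increasing. This follows from Theorem 1, since $B(x)=\ln(1+x)$ is strictly increasing (Axiom 1), composed with the increasing map $T\mapsto T/T_0$ for $T_0>0$. Directly, $R'(T)=1/[(T_0+T)]>0$. So $T_a\le T_b$ makes the third term nonnegative, and strictly positive when $T_a<T_b$, because $\gamma>0$.

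The one strict hypothesis then makes the corresponding term strictly positive, so the whole sum is positive and $I_a>I_b$. Exponentiating is monotone, so the conclusion transfers back to $I$. The argument uses only the signs of the exponents, never their values, which is why the result holds for every positive triple $(\alpha,\beta,\gamma)$.

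There is no real obstacle here. The only step needing care is the resource term: I must confirm that the composition $T\mapsto R(T)\mapsto R^{-\gamma}$ is \emph{strictly} decreasing rather than merely monotone, so that a strict improvement in $T$ alone already yields strict dominance. I must also note that the shared-protocol hypothesis is exactly what guarantees a common $T_0$, so that $R$ is the same function for both systems.
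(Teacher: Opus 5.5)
Your proof is correct and is essentially the paper's argument. The paper compares the three positive factors $C^\alpha$, $P^\beta$, $R^{-\gamma}$ directly, using that $R$ is strictly increasing, with each factor no smaller for $a$ and one strictly larger; you make the same comparison term by term after taking logarithms.
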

\begin{proof}
The benefit factors for $a$ are no smaller. Since $R$ increases strictly, its inverse-power factor is no smaller. At least one factor is strictly larger.
\end{proof}
\begin{corollary}
Exponent selection changes rankings only among Pareto-incomparable systems. A strict-dominance reversal signals inconsistent preprocessing, uncertainty, or implementation error.
\end{corollary}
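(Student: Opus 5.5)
The corollary has two parts, and I would derive both from Theorem 3 (Strict Pareto consistency) by contraposition.

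\emph{First claim.} Take two systems $a,b$ under a common protocol. Suppose the sign of $I_a-I_b$ differs for two positive exponent triples $(\alpha,\beta,\gamma)$ and $(\alpha',\beta',\gamma')$. I would show that $a$ and $b$ must then be Pareto-incomparable, by excluding the other two cases.
\begin{itemize}
\item \emph{Strict dominance.} If one system weakly dominates the other in $(C,P,T)$ with at least one strict inequality, Theorem 3 gives $I_a>I_b$ (or the reverse) for \emph{every} positive triple. No reversal is then possible.
\item \emph{Identical triples.} If $(C_a,P_a,T_a)=(C_b,P_b,T_b)$, then $I_a=I_b$ for all exponents, because the formula in Theorem 2's characterization depends only on the triple. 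Again no reversal is possible.
\end{itemize}
Hence any exponent-induced change of ranking occurs only among Pareto-incomparable pairs.

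\emph{Second claim.} This is the contrapositive applied to computed scores. Suppose an evaluation reports $I_a\le I_b$ while the measured attributes show that $a$ strictly dominates $b$. Then the computation violates Theorem 3. Since Theorem 3 holds for the exact formula under all positive exponents, the discrepancy cannot come from the exponent choice or from the representation itself. It must come from one of the following:
\begin{itemize}
\item attributes that were not measured under the shared protocol $\Pi$, since Theorem 3 requires a common protocol (inconsistent preprocessing);
\item measured attributes $(C,P,T)$ that differ from the true ones (uncertainty);
\item a computed $I$ that is not the formula (implementation error).
\end{itemize}
I would state this list as an exhaustive classification of how the hypotheses of Theorem 3 can fail in practice.

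The only delicate step is making the second claim rigorous. The phrases ``inconsistent preprocessing, uncertainty, or implementation error'' are informal. The mathematically exact content is this: a strict-dominance reversal implies that some hypothesis of Theorem 3 fails for the quantities actually used. I would present that statement precisely and treat the three named causes as an interpretive list of those failures.
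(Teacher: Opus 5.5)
Your proposal is correct and takes the same approach as the paper. The paper states this corollary without a separate proof, as an immediate consequence of the Strict Pareto consistency theorem (dominance fixes the sign of $I_a-I_b$ for every positive exponent triple), and it reads the second sentence interpretively just as you do; your explicit treatment of identical triples is a small, harmless addition the paper leaves implicit.
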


\begin{proposition}[Scale invariance]
Replacing $(T,T_0)$ by $(uT,uT_0)$ leaves $I$ unchanged. Replacing all $C$ by $aC$ and all $P$ by $bP$ multiplies all scores by $a^\alpha b^\beta$ and preserves rankings. Changing $T_0$ alone can change rankings and defines a new protocol.
\end{proposition}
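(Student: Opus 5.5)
The plan is to verify the three claims of the Scale-invariance proposition by direct substitution into the boxed formula of the CPR-IE characterization. For the first claim, $T$ enters $I$ only through $R(T)=1+\ln(1+T/T_0)$. The ratio $uT/(uT_0)=T/T_0$ for any $u>0$, so $R$ and hence $I$ are unchanged. This is the formal statement that the burden coordinate depends only on the dimensionless $x=T/T_0$, which is why the Normalized-resource-increment definition requires $T_0$ to carry the unit of $T$.

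For the second claim, we substitute $aC$ and $bP$ with $a,b>0$. Using multiplicativity of powers, $(aC)^\alpha(bP)^\beta R^{-\gamma}=a^\alpha b^\beta\,C^\alpha P^\beta R^{-\gamma}$. Equivalently, this follows from the ratio-response axiom with $\phi_C(a)=a^\alpha$ and $\phi_P(b)=b^\beta$ from the Power-response lemma. The common factor $a^\alpha b^\beta$ is strictly positive and identical across systems. Multiplying every score by the same positive constant is strictly increasing on $(0,\infty)$, so every pairwise inequality $I_s>I_{s'}$ is preserved.

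The third claim needs a counterexample, and this is the only nonroutine step. The idea is to take two Pareto-incomparable systems, since the Pareto-consistency corollary forbids reversals under dominance. Let $a$ have larger benefits and larger $T$, and let $b$ have smaller benefits and $T=0$. Put $K=C_a^\alpha P_a^\beta/(C_b^\alpha P_b^\beta)>1$. Then $I_a>I_b$ if and only if $R(T_a)^\gamma<K$, that is, $\ln(1+T_a/T_0)<K^{1/\gamma}-1$.

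As $T_0\to\infty$ the left side tends to $0$, so $a$ wins. As $T_0\to0$ the left side tends to $\infty$, so $b$ wins. By continuity in $T_0$ the ranking flips at an explicit threshold $T_0^\ast=T_a/(\exp(K^{1/\gamma}-1)-1)$. A concrete instance is $\gamma=1$, $C_a=e$, $P_a=P_b=C_b=1$, $T_b=0$, $T_a=1$, with $T_0=1$ versus $T_0=0.01$.

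Because the ranking depends on $T_0$, fixing $T_0$ is part of the protocol $\Pi$, and changing it defines a new measured object. This is exactly the closing sentence of the proposition.
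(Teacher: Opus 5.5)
Your proof is correct. The paper states this proposition without a written proof, and your direct substitution for the first two claims, together with the explicit threshold $T_0^\ast=T_a/(\exp(K^{1/\gamma}-1)-1)$ for the third, is exactly the routine verification it leaves implicit. One small repair: your concrete instance fixes $\gamma=1$ but leaves $\alpha$ free, so $K=e^{\alpha}$ and the flip between $T_0=1$ and $T_0=0.01$ holds only for $\ln(1+\ln 2)<\alpha<\ln(1+\ln 101)$; state a choice such as $\alpha=1$ there, although your general limit argument already works for every positive exponent triple.
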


\begin{proposition}[Ranking-stability cone]
System $a$ outranks $b$ exactly when
\begin{equation}
\alpha\ln(C_a/C_b)+\beta\ln(P_a/P_b)-\gamma\ln(R_a/R_b)>0.\label{eq:pair}
\end{equation}
For finitely many systems, every fixed ranking corresponds to an intersection of homogeneous open half-spaces in exponent space and is therefore a convex cone.
\end{proposition}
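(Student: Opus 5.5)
The plan is to take logarithms, so that the comparison of two scores becomes a linear inequality in the exponents; the cone statement then follows from elementary convex geometry. All scores are strictly positive by the aggregation representation (Theorem on aggregation), and $\ln$ is strictly increasing on $(0,\infty)$. Hence $I_a>I_b$ holds exactly when $\ln I_a-\ln I_b>0$. Substituting Eq.~\eqref{eq:aggregate} with $R_s=R(T_s)$ from Eq.~\eqref{eq:R} gives
\[
\ln I_a-\ln I_b=\alpha\ln(C_a/C_b)+\beta\ln(P_a/P_b)-\gamma\ln(R_a/R_b).
\]
This is precisely the left side of Eq.~\eqref{eq:pair}, which proves the pairwise claim. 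All logarithms are finite because $C,P>0$ and $R\geq 1$.

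For the cone statement, fix the data $(C_s,P_s,T_s)$ of finitely many systems $s=1,\dots,n$ under a common protocol, and write $\theta=(\alpha,\beta,\gamma)$. For each ordered pair $(a,b)$, set
\[
w_{ab}=\bigl(\ln(C_a/C_b),\,\ln(P_a/P_b),\,-\ln(R_a/R_b)\bigr)\in\mathbb R^3.
\]
Then $a$ outranks $b$ if and only if $\langle w_{ab},\theta\rangle>0$. A fixed strict ranking $\sigma$ is a total order $s_{\sigma(1)}\succ\cdots\succ s_{\sigma(n)}$. It is realized at $\theta$ exactly when $\langle w_{\sigma(i)\sigma(j)},\theta\rangle>0$ for all $i<j$; by transitivity of the real order on scores, the consecutive pairs suffice. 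The set of admissible $\theta$ is therefore
\[
K_\sigma=\{\theta\in(0,\infty)^3:\ \langle w_{\sigma(i)\sigma(j)},\theta\rangle>0\ \text{for all } i<j\}.
\]
This is a finite intersection of homogeneous open half-spaces. The positivity constraints $\theta_k>0$ are themselves homogeneous open half-spaces $\langle e_k,\theta\rangle>0$. Each such half-space is convex and closed under multiplication by positive scalars, and both properties survive intersection. So $K_\sigma$ is a convex cone, open and possibly empty.

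The main subtlety is not computational but definitional: how ``fixed ranking'' treats ties and pairs with $w_{ab}=0$. If $w_{ab}=0$, the two systems tie for every $\theta$, and the half-space $\{\langle 0,\theta\rangle>0\}$ is empty. The plan handles this in one of two ways. The first is to restrict to strict rankings, noting that such pairs make $K_\sigma$ empty, which is still a (trivial) convex cone. The second is to encode ties by hyperplanes $\langle w_{ab},\theta\rangle=0$; the resulting set is again a convex cone, though no longer open. I will state the strict version, which is what the proposition claims, and remark on the tie case. A secondary remark is that the Pareto-consistency theorem shows dominated pairs give $w_{ab}$ in the closed positive orthant minus the origin, so those constraints hold automatically on $(0,\infty)^3$. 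This matches the Corollary: only Pareto-incomparable pairs actually cut the cone.
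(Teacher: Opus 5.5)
Your proposal is correct and follows the paper's own argument: take $\ln(I_a/I_b)$ to obtain an inequality that is linear and homogeneous in $(\alpha,\beta,\gamma)$, then intersect the resulting open half-spaces. Your extra details go beyond the paper's two-line proof without changing its route: reducing to consecutive pairs (which also follows from $w_{ac}=w_{ab}+w_{bc}$), including the positivity constraints as further half-spaces, and noting the caveat about ties and $w_{ab}=0$.
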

\begin{proof}Take $\ln(I_a/I_b)$. Each pairwise ordering is linear and homogeneous in $(\alpha,\beta,\gamma)$.\end{proof}

\begin{corollary}[Compensation]
At fixed $I,T$, $d\ln P/d\ln C=-\alpha/\beta$. At fixed $I,P$,
\[
d\ln C/dT=\gamma/[\alpha(T_0+T)R(T)].
\]
A hard safety or latency condition must remain a feasibility constraint rather than a compensable score component.
\end{corollary}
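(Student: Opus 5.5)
The plan is to obtain both identities from the implicit function theorem applied to the log-score
\[
F(\ln C,\ln P,T)=\ln I=\alpha\ln C+\beta\ln P-\gamma\ln R(T),
\]
using the local responses already recorded in the Local responses proposition. On any level set $\ln I=\text{const}$ we have $dF=0$, that is,
\[
\alpha\,d\ln C+\beta\,d\ln P-\gamma\,\frac{R'(T)}{R(T)}\,dT=0,
\]
where $R'(T)=1/(T_0+T)$ follows from Eq.~\eqref{eq:R} by differentiating $1+\ln(1+T/T_0)$.

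For the first identity we hold $T$ fixed, so $dT=0$. The relation then becomes $\alpha\,d\ln C+\beta\,d\ln P=0$, which gives $d\ln P/d\ln C=-\alpha/\beta$. This is legitimate because $\beta>0$, so $\partial F/\partial\ln P\neq0$ and the level set can be solved for $\ln P$ as a smooth function of $\ln C$. For the second identity we hold $P$ fixed, so $d\ln P=0$. Solving for $d\ln C$ gives $d\ln C/dT=\gamma R'(T)/[\alpha R(T)]=\gamma/[\alpha(T_0+T)R(T)]$, which is valid since $\alpha>0$. Equivalently, one can solve directly: $\ln C=(\text{const}+\gamma\ln R(T))/\alpha$, and differentiating in $T$ gives the same result.

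There is no real technical obstacle here. The only point needing care is that the implicit function theorem applies, that is, the relevant partial derivatives $\alpha$ and $\beta$ are nonzero. This is guaranteed by the positivity of the exponents from the Aggregation representation theorem, and $R(T)\ge1>0$ keeps the burden term well defined.

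The closing sentence about hard safety or latency conditions is interpretive rather than mathematical. I would justify it by noting that both compensation rates are finite and nonzero everywhere. Consequently, any shortfall in one coordinate can be offset by a sufficiently large gain in another, which follows from the unboundedness of $C^\alpha$ and $P^\beta$. A constraint that must never be traded off therefore cannot be encoded as a component of $I$ and has to be imposed as a feasibility restriction on the admissible set before scoring.
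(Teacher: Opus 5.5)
Your proposal is correct and follows the paper's route: the paper gives no separate proof and treats this as a direct consequence of the Local responses proposition. That amounts to exactly your total differentiation of $\ln I=\alpha\ln C+\beta\ln P-\gamma\ln R(T)$ along level sets, using $R'(T)=1/(T_0+T)$ and positive exponents. You are also right to treat the final sentence as interpretive: finite, nonzero compensation rates mean a constraint-violating system can outscore a feasible one.
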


\begin{proposition}[Uncertainty envelope]
If the absolute errors in $\widehat{\ln C}$, $\widehat{\ln P}$, and $\widehat{\ln R}$ are bounded by $\varepsilon_C,\varepsilon_P,\varepsilon_R$, then
\begin{equation}
|\widehat{\ln I}-\ln I|\leq\alpha\varepsilon_C+\beta\varepsilon_P+\gamma\varepsilon_R.\label{eq:uncertainty}
\end{equation}
A pairwise order is certified when the estimated log-score gap exceeds both systems' error envelopes combined.
\end{proposition}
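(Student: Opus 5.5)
The plan is to work entirely in log coordinates, where the representation of Theorem 3 becomes linear in the attribute errors. By Eq.~\eqref{eq:CPRIE} and the definition of $R$ in Eq.~\eqref{eq:R},
\[
\ln I=\alpha\ln C+\beta\ln P-\gamma\ln R .
\]
The estimated score is built from the same exponents applied to the estimated attributes:
\[
\widehat{\ln I}=\alpha\widehat{\ln C}+\beta\widehat{\ln P}-\gamma\widehat{\ln R}.
\]
Subtracting the two expressions gives
\[
\widehat{\ln I}-\ln I=\alpha(\widehat{\ln C}-\ln C)+\beta(\widehat{\ln P}-\ln P)-\gamma(\widehat{\ln R}-\ln R).
\]
Applying the triangle inequality, with $\alpha,\beta,\gamma>0$ so that $|\alpha e|=\alpha|e|$ and so on, and then the assumed bounds $\varepsilon_C,\varepsilon_P,\varepsilon_R$, yields Eq.~\eqref{eq:uncertainty} directly.

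For the certification clause, write $E_s=\alpha\varepsilon_{C,s}+\beta\varepsilon_{P,s}+\gamma\varepsilon_{R,s}$ for the envelope of system $s$. Suppose the estimates satisfy
\[
\widehat{\ln I_a}-\widehat{\ln I_b}>E_a+E_b .
\]
Applying Eq.~\eqref{eq:uncertainty} to each system separately gives
\[
\ln I_a-\ln I_b\geq(\widehat{\ln I_a}-E_a)-(\widehat{\ln I_b}+E_b)>0 .
\]
Hence $I_a>I_b$. By Proposition~5 (the ranking-stability cone, Eq.~\eqref{eq:pair}), this is exactly the statement that $a$ outranks $b$ in the true ordering.

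The only step needing care is interpretive rather than technical. The error in $\ln R$ must be stated directly in the $R$ coordinate, not in raw $T$. If one instead has an error bound on $T$, I would convert it through the mean value theorem using Proposition~3. That proposition gives
\[
\left|\frac{d\ln R}{dT}\right|=\frac{1}{(T_0+T)R}\leq\frac{1}{T_0},
\]
so $\varepsilon_R\leq\varepsilon_T/T_0$ suffices. Otherwise the argument is a routine linear bound.
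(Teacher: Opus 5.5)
Your proposal is correct and matches the paper's proof, which is exactly the triangle inequality applied to the affine log form $\ln I=\alpha\ln C+\beta\ln P-\gamma\ln R$. Your derivation of the certification clause via $E_a+E_b$ and your $T$-to-$R$ conversion (valid when both $T$ and $\widehat T$ are nonnegative) are sound additions, though the cross-references are off: the ranking-stability cone is Proposition 4 and the local responses are Proposition 1, not Propositions 5 and 3.
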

\begin{proof}Apply the triangle inequality to the affine log representation.\end{proof}

\subsection{Worked Illustration}
Set $\alpha=\beta=\gamma=1$ and $T_0=1$, and consider three systems $a,b,c$ sharing a protocol with
\[
(C_a,P_a,T_a)=(4,1,1),\qquad (C_b,P_b,T_b)=(1,4,1),\qquad (C_c,P_c,T_c)=(2,2,4).
\]
For $T=1$ the burden factor is $R=1+\ln 2\approx1.693$; for $T=4$ it is $R=1+\ln5\approx2.609$. Thus
\[
I_a=\frac{4}{1.693}\approx2.36,\qquad I_b=\frac{4}{1.693}\approx2.36,\qquad I_c=\frac{4}{2.609}\approx1.53.
\]
Systems $a$ and $b$ differ only by swapping the $C$ and $P$ coordinates; they are Pareto-incomparable and tie at equal weights. Their order is exponent-dependent: with $(\alpha,\beta)=(2,1)$, $I_a\approx9.45>2.36\approx I_b$, while with $(\alpha,\beta)=(1,2)$ the ranking reverses. System $c$ is not Pareto-dominated by $a$ or $b$, yet ranks last at equal weights because of its fourfold resource cost; with different exponents its place can change. A hard latency budget, in contrast, would exclude $c$ before any scoring.

\section{A General Interaction Representation}
Set
\begin{equation}
z=(z_1,z_2,z_3)=(\ln C,\ln P,-\ln R),
\qquad \theta=(\alpha,\beta,\gamma)>0.
\label{eq:z}
\end{equation}
Then CPR-IE is log-linear: $\ln I=\theta^\top z$. A second-order parent family is
\begin{equation}
F_G(z)=a_0+a^\top z+\frac12z^\top H z,
\qquad H=H^\top,
\label{eq:parent}
\end{equation}
with generalized index $I_G=\exp(F_G)$. The diagonal terms permit a coordinate's elasticity to vary with its own level. The off-diagonal terms encode compression--prediction, compression--resource, and prediction--resource interactions.

\begin{theorem}[CPR-IE as the independence-restricted parent model]
Within the twice differentiable family in Eq.~\eqref{eq:parent}, the following are equivalent:
\begin{enumerate}
\item the log-score increment caused by any coordinate displacement $t e_j$ is independent of both the starting point and the other coordinates;
\item $H=0$;
\item $I_G=A C^{a_1}P^{a_2}R^{-a_3}$ for some $A>0$.
\end{enumerate}
With strong orientation and reference normalization, $A=1$ and $a_1,a_2,a_3>0$, yielding CPR-IE.
\end{theorem}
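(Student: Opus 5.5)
I will prove the cycle (1)$\Rightarrow$(2)$\Rightarrow$(3)$\Rightarrow$(1) and then apply the orientation and normalization axioms. Everything runs in the log coordinates $z=(\ln C,\ln P,-\ln R)$ of Eq.~\eqref{eq:z}. These coordinates are a bijection from $(0,\infty)^3$ onto $\mathbb R^3$, so statements about $(C,P,R)$ and statements about $z$ are interchangeable. The key identity is the exact second-order expansion of the quadratic $F_G$:
\[
\Delta_j(z,t):=F_G(z+te_j)-F_G(z)=t\,a_j+t\,(Hz)_j+\tfrac12 t^2 H_{jj},
\]
which uses $H=H^\top$. Condition (1) says precisely that $\Delta_j(z,t)$ does not depend on $z$.

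For (1)$\Rightarrow$(2), fix $j$ and $t\neq0$. Independence of $z$ forces the affine map $z\mapsto t(Hz)_j$ to be constant. Hence the $j$th row of $H$ vanishes. Doing this for $j=1,2,3$ gives $H=0$. Taking a mixed difference,
\[
\Delta_j(z+se_k,t)-\Delta_j(z,t)=ts\,H_{jk},
\]
isolates each entry directly, including the diagonal ones. The diagonal case is the point to watch: $H_{jj}$ appears in the $t^2$ term as well, but the $z$-dependent linear term $t H_{jj} z_j$ already kills it. So "independent of the starting point" excludes own-level curvature, not just cross interactions. I will note this explicitly, because otherwise one might expect only the off-diagonal entries to vanish.

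For (2)$\Rightarrow$(3), $H=0$ gives $F_G=a_0+a^\top z$. Exponentiating,
\[
I_G=e^{a_0}C^{a_1}P^{a_2}R^{-a_3},
\]
with $A=e^{a_0}>0$. For (3)$\Rightarrow$(1), taking logs gives $\ln I_G=\ln A+a^\top z$. A displacement $te_j$ then changes the log-score by exactly $t a_j$, independent of everything else. This also shows the form in (3) lies in the family of Eq.~\eqref{eq:parent} with $H=0$, so the equivalence closes within the family. Twice differentiability is automatic for the quadratic and is only needed to identify $H$ with the Hessian of $F_G$.

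Finally, I impose Axioms 4 and 6. Normalization gives $I(1,1,1)=A=1$, since $z=0$ there. Strong orientation requires $\partial I/\partial C>0$, $\partial I/\partial P>0$ and $\partial I/\partial R<0$. In log form these are $a_1/C>0$, $a_2/P>0$ and $-a_3/R<0$, so $a_1,a_2,a_3>0$. Setting $(\alpha,\beta,\gamma)=(a_1,a_2,a_3)$ recovers Eq.~\eqref{eq:aggregate}, and composing with Eq.~\eqref{eq:R} gives Eq.~\eqref{eq:CPRIE} by Theorem 3. Strict monotonicity only yields $a_j\ge0$ from derivatives in general, but each exponent enters monotonically, so $a_j=0$ would give a constant factor and violate strictness. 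That settles the sign argument. I expect the only real obstacle to be the diagonal-term subtlety in (1)$\Rightarrow$(2); the rest is routine algebra.
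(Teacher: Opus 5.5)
Your proposal is correct and follows essentially the paper's route. You expand $F_G(z+te_j)-F_G(z)=ta_j+t(Hz)_j+\tfrac12t^2H_{jj}$, use independence from $z$ to kill each row of $H$, exponentiate the affine remainder, and then let normalization fix $A=1$ and orientation fix the signs. Your mixed-difference check and your remark that the diagonal entries are already eliminated by the linear term $tH_{jj}z_j$ spell out the paper's one-line step $e_j^\top H=0$ rather than constituting a different argument.
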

\begin{proof}
For coordinate $j$,
\[
F_G(z+t e_j)-F_G(z)=t a_j+t e_j^\top Hz+\tfrac12t^2H_{jj}.
\]
Independence from the starting point $z$ for every $t$ implies $e_j^\top H=0$. Repeating for all $j$ gives $H=0$. The reverse implication is immediate. Exponentiating $a_0+a^\top z$ gives the stated power form. Orientation fixes the signs, and normalization fixes $A$.
\end{proof}

\begin{corollary}[Interaction specification test]
Within Eq.~\eqref{eq:parent}, any stable nonzero entry of $H$ is evidence against context-independent ratio response. The restricted CPR-IE model can therefore be tested against a nested interaction alternative without redefining its three measured attributes.
\end{corollary}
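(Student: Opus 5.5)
The corollary is a contrapositive reading of the preceding theorem, so the plan is to derive it from the equivalence (1)$\Leftrightarrow$(2) there.

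First I make precise what ``context-independent ratio response'' means inside the parent family of Eq.~\eqref{eq:parent}. Axiom 5 says that the ratio $I_G(z+te_j)/I_G(z)$ depends only on $t$ (equivalently, on the scale factor $e^t$ in the original attribute, or $e^{-t}$ for $R$). Taking logarithms, this is exactly condition (1) of the theorem. The identity
\[
F_G(z+te_j)-F_G(z)=t a_j+t e_j^\top Hz+\tfrac12t^2H_{jj}
\]
then shows that Axiom 5 holds within the family if and only if $H=0$.

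Second, I take the contrapositive. Suppose some entry $H_{jk}\neq0$. Then the log-ratio response to a displacement in coordinate $j$ contains the term $tH_{jk}z_k$. That term varies with the level of coordinate $k$, or with the level of $j$ itself when $k=j$, because $H_{jj}$ enters through $e_j^\top Hz$. So the ratio response is context dependent and Axiom 5 fails. Here ``stable'' is read as meaning that the entry is consistently estimated as nonzero across protocol-fixed samples, which turns the logical implication into evidentiary support.

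Third, for the nesting claim, I note that CPR-IE is the submodel $H=0$ of Eq.~\eqref{eq:parent}. Both models are written in the same coordinates $z=(\ln C,\ln P,-\ln R)$, so the measured attributes $C$, $P$, $T$ and the transform $R(T)$ are unchanged. Testing the restriction is therefore a test of the linear constraint $H=0$ on six free parameters (symmetric $H$), for example by a Wald or likelihood-ratio comparison of nested models. It requires no redefinition of the attributes.

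The main obstacle is the gap between ``evidence'' and ``implication.'' Estimation noise can make $\hat H\neq0$ even when the true $H$ is zero. I would handle this by stating the result conditionally: if the true $H$ is nonzero, Axiom 5 is violated. The statistical calibration of the test itself I would defer to the later finite-sample results rather than prove it here.
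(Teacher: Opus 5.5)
Your proposal is correct and follows the paper's route. The corollary is the contrapositive of (1)$\Leftrightarrow$(2) in the parent-model theorem, read off the same increment identity $F_G(z+te_j)-F_G(z)=ta_j+te_j^\top Hz+\tfrac12t^2H_{jj}$, and the nesting comes from taking $H=0$ in the unchanged coordinates $z$. One small correction: the paper's later sub-Gaussian results bound ranking reversals, not the size of a test of $H=0$, so they would not provide the calibration you defer to them; the corollary claims no calibration, so the argument does not depend on it.
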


This theorem does not claim that the quadratic parent is universally exhaustive. It supplies the smallest smooth extension that exposes local interaction and curvature. Higher-order or nonparametric alternatives remain admissible when the data warrant them.

\section{Identification of the Exponents}
Let $z_s$ be defined by Eq.~\eqref{eq:z}. Cardinal observations satisfy
\begin{equation}
y_s=\kappa+\theta^\top z_s+u_s,
\label{eq:cardinal}
\end{equation}
where $y_s$ is an independently defined log-utility. Ordinal observations reveal only comparisons $s\succ t$, determined by $\theta^\top(z_s-z_t)>0$.

\begin{theorem}[Cardinal identification]
Let $Z$ be the matrix with rows $z_s^\top$, after removing the intercept by centering. In the noiseless model, $\theta$ is uniquely identified from cardinal outcomes if and only if $\operatorname{rank}(Z)=3$.
\end{theorem}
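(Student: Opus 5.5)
The plan is to reduce the claim to the injectivity of a linear map after eliminating the intercept. In the noiseless version of Eq.~\eqref{eq:cardinal} we have $y_s=\kappa+\theta^\top z_s$ for every system $s=1,\dots,n$. Write $\bar y=n^{-1}\sum_s y_s$ and $\bar z=n^{-1}\sum_s z_s$. Averaging gives $\bar y=\kappa+\theta^\top\bar z$. Subtracting this from each equation yields the centered system
\[
\tilde y = Z\theta,\qquad \tilde y_s=y_s-\bar y,\qquad Z=\text{rows } (z_s-\bar z)^\top .
\]
The unknown $\kappa$ no longer appears. Once $\theta$ is known, $\kappa=\bar y-\theta^\top\bar z$ is recovered directly, so identification of $(\kappa,\theta)$ is equivalent to identification of $\theta$ from $\tilde y=Z\theta$. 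I would make this equivalence explicit first, because the theorem quietly relies on it.

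\emph{Sufficiency.} Suppose $\operatorname{rank}(Z)=3$, so $Z$ has full column rank. If $\theta$ and $\theta'$ generate the same outcomes, then $Z(\theta-\theta')=0$, and full column rank forces $\theta=\theta'$. Constructively, $Z^\top Z$ is invertible and $\theta=(Z^\top Z)^{-1}Z^\top\tilde y$.

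\emph{Necessity.} Suppose $\operatorname{rank}(Z)<3$. Then there is a nonzero $v$ with $Zv=0$. Fix any admissible true parameter $\theta$. The point is that $\theta$ is constrained to the open positive orthant, so the obstruction needs an alternative that stays inside it. Because the orthant is open, $\theta'=\theta+\epsilon v$ remains positive for sufficiently small $\epsilon>0$. Its compensating intercept is $\kappa'=\kappa-\epsilon v^\top\bar z$. For every $s$ we then have
\[
\kappa'+\theta'^\top z_s=\kappa+\theta^\top z_s+\epsilon v^\top(z_s-\bar z)=y_s,
\]
since $v^\top(z_s-\bar z)$ is the $s$-th entry of $Zv=0$. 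So $(\kappa',\theta')\neq(\kappa,\theta)$ produces identical cardinal outcomes, and $\theta$ is not identified.

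The step I expect to need care is necessity. Without the openness of the admissible set the argument fails: if the parameter space were a closed or lower-dimensional set, a rank-deficient design might still pin $\theta$ down. I would also note that the $n\ge 4$ requirement is implicit in the rank condition, since centering reduces the rank of $n$ rows by at most one.
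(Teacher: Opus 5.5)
Your proof is correct and follows the same route as the paper: you center to eliminate $\kappa$, use full column rank of $Z$ for uniqueness, and use a null vector $v$ of $Z$ for non-uniqueness, while your explicit intercept adjustment $\kappa'=\kappa-\epsilon v^\top\bar z$ and the small-$\epsilon$ step keeping $\theta+\epsilon v$ in the open orthant make rigorous what the paper only asserts as ``whenever both satisfy the parameter restrictions.'' One correction to your closing remark: $n\ge4$ is needed because the centered rows sum to zero, so $\operatorname{rank}(Z)\le n-1$, whereas ``centering lowers rank by at most one'' is a lower bound on $\operatorname{rank}(Z)$ and does not imply it.
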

\begin{proof}
If $Z$ has full column rank, $Z\theta=y$ has at most one solution. If its rank is below three, there exists $v\ne0$ with $Zv=0$, so $\theta$ and $\theta+v$ generate identical outcomes whenever both satisfy the parameter restrictions.
\end{proof}

Pure ordering data cannot identify the magnitude of $\theta$: every $q\theta$, $q>0$, generates the same comparisons. Impose the simplex normalization
\begin{equation}
\Theta_1=\{\theta>0:\mathbf 1^\top\theta=1\}.
\label{eq:simplex}
\end{equation}

\begin{theorem}[Local ordinal identification]
Let $D$ contain the observed comparison differences $(z_s-z_t)^\top$, and let $Q$ be any $3\times2$ matrix whose columns span $\{v:\mathbf1^\top v=0\}$. Under a regular pairwise-response model whose likelihood depends strictly on $D\theta$, a normalized interior parameter $\theta\in\Theta_1$ is locally identified if and only if
\begin{equation}
\operatorname{rank}(DQ)=2.
\label{eq:ordinalrank}
\end{equation}
\end{theorem}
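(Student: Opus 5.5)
We reduce the statement to a local injectivity statement about the linear index map $\theta\mapsto D\theta$ restricted to the simplex $\Theta_1$. Parametrize a neighborhood of an interior point $\theta_0\in\Theta_1$ by $\theta=\theta_0+Q\eta$ with $\eta\in\mathbb R^2$ small. Since $\theta_0>0$ componentwise, every sufficiently small $\eta$ keeps $\theta$ in $\Theta_1$, and every point of $\Theta_1$ near $\theta_0$ arises this way. The columns of $Q$ span $\{v:\mathbf1^\top v=0\}$, so this chart is a bijection between a neighborhood of $0$ in $\mathbb R^2$ and a relative neighborhood of $\theta_0$ in $\Theta_1$. In these coordinates the likelihood depends on $\theta$ only through
\[
D\theta = D\theta_0 + DQ\,\eta .
\]

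\textbf{Sufficiency.} Suppose $\operatorname{rank}(DQ)=2$. Then $\eta\mapsto DQ\eta$ is injective, so distinct normalized parameters near $\theta_0$ produce distinct index vectors $D\theta$. The regularity assumption says the likelihood depends \emph{strictly} on $D\theta$, meaning distinct index vectors yield distinct response distributions. Hence distinct nearby parameters are observationally distinct, which is local identification.

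For a Fisher-information formulation we would add one step. By the chain rule the information at $\eta=0$ is $Q^\top D^\top W D Q$, where $W$ is the information matrix with respect to the index vector. Under regularity (for example logistic or probit pairwise responses) $W$ is positive definite, so the information is nonsingular exactly when $DQ$ has full column rank. Rothenberg's theorem then gives local identification at a regular point.

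\textbf{Necessity.} Suppose $\operatorname{rank}(DQ)<2$. Choose $\eta\neq0$ with $DQ\eta=0$, and set $\theta_t=\theta_0+tQ\eta$ for small $|t|$. Each $\theta_t$ lies in $\Theta_1$ by interiority, and $\theta_t\neq\theta_0$ for $t\neq0$ because $Q$ has full column rank. Yet $D\theta_t=D\theta_0$, so the likelihood is identical along this segment. Thus every neighborhood of $\theta_0$ contains observationally equivalent parameters, and local identification fails. This mirrors the null-vector argument of the cardinal identification theorem, with the simplex constraint handled by moving along $\operatorname{span}(Q)$.

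\textbf{Main obstacle.} The linear algebra is routine; the delicate step is pinning down what ``regular pairwise-response model whose likelihood depends strictly on $D\theta$'' must mean for sufficiency. I would fix it explicitly as two conditions: the map $D\theta\mapsto$ (distribution) is injective, and its information matrix $W$ is positive definite. Both the injectivity proof and the Rothenberg proof then go through. If only the weaker ``depends on $D\theta$'' were available, sufficiency could fail, so stating these conditions is the key step. I would also note that the result is independent of the choice of $Q$: any two admissible $Q$ differ by an invertible $2\times2$ factor, which leaves $\operatorname{rank}(DQ)$ unchanged.
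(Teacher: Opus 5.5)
Your proposal is correct and follows essentially the same argument as the paper. Both write simplex-preserving perturbations as $Q\eta$, observe that they are observationally null exactly when $DQ\eta=0$, and conclude that local uniqueness holds if and only if $DQ$ has full column rank. Your explicit reading of ``depends strictly on $D\theta$'' as injectivity of the index-to-distribution map, and the optional Fisher-information variant, make precise an assumption that the paper's three-line proof uses implicitly.
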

\begin{proof}
Admissible local perturbations preserving Eq.~\eqref{eq:simplex} have the form $Qv$. They are observationally null exactly when $DQv=0$. The parameter is locally unique precisely when the only such $v$ is zero, equivalently when $DQ$ has full column rank.
\end{proof}

\begin{corollary}
If all evaluated systems lie on a line in log-attribute space, or if one attribute is an exact affine combination of the others over the candidate set, ordinal data cannot separately identify all normalized exponents. More systems do not cure a deficient design geometry.
\end{corollary}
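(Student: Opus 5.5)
The plan is to derive the corollary directly from the rank condition of the Local ordinal identification theorem, Eq.~\eqref{eq:ordinalrank}. In both scenarios I will construct a nonzero $v\in\mathbb R^2$ with $DQv=0$. Then $\operatorname{rank}(DQ)\le1<2$, and local identification fails at every interior $\theta\in\Theta_1$. The whole argument reduces to one fact: every row of $D$ is a difference $z_s-z_t$, so it lies in the linear span $W$ of the centered design points $\{z_s-z_1\}$.

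\emph{Collinear case.} If all $z_s$ lie on a line $z_1+\mathbb R w$, then $W\subseteq\operatorname{span}\{w\}$. Hence $\operatorname{rank}(D)\le1$ and $\operatorname{rank}(DQ)\le\operatorname{rank}(D)\le1$. This already contradicts Eq.~\eqref{eq:ordinalrank}. Concretely, $\ker D$ is at least two-dimensional in $\mathbb R^3$, and $\operatorname{col}(Q)$ is two-dimensional. Two planes in $\mathbb R^3$ intersect nontrivially, so some $Qv\ne0$ satisfies $DQv=0$.

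\emph{Affine-dependence case.} Suppose some attribute is an exact affine combination of the others over the candidate set, say $z_{s,j}=c_0+\sum_{k\ne j}c_kz_{s,k}$ for all $s$. Then the intercept cancels in differences, so every row $d$ of $D$ satisfies $d^\top u=0$ with $u=e_j-\sum_{k\ne j}c_ke_k\ne0$. Thus $\operatorname{rank}(D)\le2$ and $W\subseteq u^\perp$.

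This is the main obstacle: the rank bound on $D$ alone no longer suffices, since $\operatorname{rank}(D)=2$ could still give $\operatorname{rank}(DQ)=2$. I need a null direction of $D$ that also lies in the simplex tangent space $\mathbf1^\perp$. The candidate is $u$ itself: $Du=0$, and $u\in\operatorname{col}(Q)$ exactly when $\mathbf1^\top u=0$, i.e.\ $1=\sum_{k\ne j}c_k$. If instead $\mathbf1^\top u\neq0$, the observed differences can still span $u^\perp$, and $DQ$ can have rank $2$. An example is the design $z_3\equiv0$, which is affine dependent, yet $u=e_3$ leaves $(\theta_1,\theta_2)$ identified on the simplex.

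So the literal corollary needs a reading that the paper intends. I would adopt the scale-free reading that fits the preceding remark on ordinal data. Ordinal data identify $\theta$ only up to positive scale, so ``separately identifying all normalized exponents'' means identifying the ray of $\theta$. Along $u$, the perturbation $\theta+\epsilon u$ leaves $D\theta$ unchanged. Its ray coincides with that of $\theta$ only if $u\parallel\theta$, which an exact attribute dependence does not force. I would therefore state precisely: when the dependence weights satisfy $\sum_{k\ne j}c_k=1$ (for instance, an attribute equal to an average of the others up to a constant), or when the design is collinear, $\operatorname{rank}(DQ)<2$. Otherwise the design deficiency manifests as non-identification of the unnormalized direction rather than of the simplex point.

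Finally, adding systems does not change the conclusion, because the new differences still lie in the same subspace $W$. The null vector found above therefore persists regardless of how many comparisons are observed.
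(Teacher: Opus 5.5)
Your proposal is correct, and where the corollary is true it follows the paper's route. The paper gives no separate proof: it treats the statement as an immediate consequence of the criterion $\operatorname{rank}(DQ)=2$, and your collinear bound $\operatorname{rank}(DQ)\le\operatorname{rank}(D)\le 1$ is exactly that consequence. Your objection to the affine clause is also right. It exposes an error in the paper, not a gap in your argument, because $\operatorname{rank}(D)\le 2$ does not force $\operatorname{rank}(DQ)<2$. A direct check confirms your condition. Write the dependence as $z_{s,3}=c_0+c_1z_{s,1}+c_2z_{s,2}$. Then $\theta^\top d=(\theta_1+c_1\theta_3)d_1+(\theta_2+c_2\theta_3)d_2$. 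With $\theta_3=1-\theta_1-\theta_2$, the map $(\theta_1,\theta_2)\mapsto(\theta_1+c_1\theta_3,\,\theta_2+c_2\theta_3)$ is affine with Jacobian determinant $1-c_1-c_2$. So whenever the first two columns of $D$ have rank two, the paper's own rank criterion gives $\operatorname{rank}(DQ)=2$ exactly when $c_1+c_2\neq 1$, which matches your condition $\mathbf 1^\top u=0$. A less degenerate witness than a constant attribute is $z_3=z_1+z_2$ on the points $(0,0,0)$, $(1,0,1)$, $(0,1,1)$. Take $Q$ with columns $(1,0,-1)^\top$ and $(0,1,-1)^\top$; then $DQ$ has rows $(0,-1)$ and $(-1,0)$, so $\operatorname{rank}(DQ)=2$.

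Your scale-free repair is the right alternative, and it can be stated more sharply. Suppose the likelihood depends on $D\theta$ only up to a positive factor. A simplex perturbation $\theta+Qv$ is then observationally equivalent to $\theta$ when $DQv=(q-1)D\theta$ for some $q>0$, i.e.\ when $(v,1-q)$ lies in the kernel of $D\,[\,Q\ \ \theta\,]$. Since $\mathbf 1^\top\theta=1$, the $3\times 3$ matrix $[\,Q\ \ \theta\,]$ is invertible. Hence, for $D\theta\neq 0$, local identification holds if and only if $\operatorname{rank}(D)=3$. Both clauses of the corollary then follow at once. The remark about adding systems also holds, because new differences stay in the same deficient subspace. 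This formulation tidies two loose ends in your write-up. First, your exceptional case $u\parallel\theta$ forces $D\theta=0$, i.e.\ every observed comparison is a tie. Second, your sentence contrasting the \emph{unnormalized direction} with the \emph{simplex point} should be rephrased. Under the paper's magnitude-sensitive model, what is unidentified is the unnormalized vector $\theta$. Under a scale-free model, the ray is unidentified, and the ray carries the same information as the simplex point.
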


\section{Stochastic Ranking Guarantees}
Assume $n$ independent repeated measurements yield $\widehat z_s=z_s+e_s$. A centered scalar $X$ is sub-Gaussian with variance proxy $v$ when $\mathbb E e^{\lambda X}\leq e^{\lambda^2v/2}$ for every $\lambda\in\mathbb R$.

\begin{theorem}[Pairwise reversal bound]
Let the true log-score gap be $\Delta_{ab}=\theta^\top(z_a-z_b)>0$. Suppose
$\theta^\top(e_a-e_b)$ is centered sub-Gaussian with variance proxy $v_{ab}/n$. Then
\begin{equation}
\Pr(\widehat I_a\leq\widehat I_b)
\leq \exp\!\left(-\frac{n\Delta_{ab}^2}{2v_{ab}}\right).
\label{eq:reversal}
\end{equation}
\end{theorem}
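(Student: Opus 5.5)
The plan is to reduce the event $\widehat I_a\le\widehat I_b$ to a one-sided deviation of a single sub-Gaussian scalar, and then apply the Chernoff bound with the optimal exponential parameter.

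Since $\ln I$ is log-linear in the coordinates of Eq.~\eqref{eq:z}, the estimated log-score gap is
\[
\ln\widehat I_a-\ln\widehat I_b=\theta^\top(\widehat z_a-\widehat z_b)=\Delta_{ab}+X,\qquad X:=\theta^\top(e_a-e_b).
\]
Because $\ln$ is strictly increasing, $\widehat I_a\le\widehat I_b$ holds exactly when $\Delta_{ab}+X\le 0$, that is, when $-X\ge\Delta_{ab}$.

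The sub-Gaussian condition is stated for every $\lambda\in\mathbb R$, so $-X$ is also centered sub-Gaussian with the same variance proxy $v_{ab}/n$. For any $\lambda>0$, Markov's inequality applied to $e^{-\lambda X}$ gives
\[
\Pr(-X\ge\Delta_{ab})\le e^{-\lambda\Delta_{ab}}\,\mathbb E e^{-\lambda X}\le \exp\!\Big(-\lambda\Delta_{ab}+\frac{\lambda^2v_{ab}}{2n}\Big).
\]
Minimizing the exponent over $\lambda$ gives $\lambda^*=n\Delta_{ab}/v_{ab}$, which is positive because $\Delta_{ab}>0$. At this value the exponent equals $-n\Delta_{ab}^2/(2v_{ab})$, which is Eq.~\eqref{eq:reversal}.

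There is no real obstacle here. The only points that need care are these:
\begin{itemize}
\item The event should be written as the closed inequality $\le$, since Markov's inequality handles it directly.
\item Positivity of $\Delta_{ab}$ is what makes the optimal $\lambda^*$ admissible.
\item The variance proxy must be taken from the hypothesis in its $v_{ab}/n$ form. Independence of the $n$ repeated measurements only motivates this scaling and is not used separately.
\end{itemize}
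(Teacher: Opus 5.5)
Your proposal is correct and follows the paper's proof: you reduce the reversal event to the one-sided deviation $\theta^\top(e_a-e_b)\le-\Delta_{ab}$ and apply the sub-Gaussian Chernoff bound. The only difference is that you write out the Markov step and the optimization at $\lambda^*=n\Delta_{ab}/v_{ab}$, which the paper simply cites as the standard bound.
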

\begin{proof}
A reversal requires $\theta^\top(e_a-e_b)\leq-\Delta_{ab}$. The standard Chernoff bound for a centered sub-Gaussian variable gives Eq.~\eqref{eq:reversal}.
\end{proof}

If coordinate errors are independent and their per-replicate proxies are $\sigma_{s,j}^2$, one may take
\begin{equation}
v_{ab}=\sum_{j=1}^3\theta_j^2(\sigma_{a,j}^2+\sigma_{b,j}^2).
\end{equation}

\begin{corollary}[Repeated-measurement requirement]
To make the reversal probability at most $\delta$, it is sufficient that
\begin{equation}
n\geq \frac{2v_{ab}}{\Delta_{ab}^2}\ln\frac1\delta.
\label{eq:samplesize}
\end{equation}
For $M$ prespecified pairwise claims, replacing $\delta$ by $\delta/M$ controls the familywise probability of any reversal by the union bound.
\end{corollary}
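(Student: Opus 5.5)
The corollary follows by inverting the pairwise reversal bound of Theorem~\ref{eq:reversal}'s statement, i.e.\ Eq.~\eqref{eq:reversal}, and then applying a union bound over the prespecified claims.

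\emph{Single pair.} Eq.~\eqref{eq:reversal} gives $\Pr(\widehat I_a\le\widehat I_b)\le\exp(-n\Delta_{ab}^2/(2v_{ab}))$. It therefore suffices that this right-hand side is at most $\delta$. Because $\Delta_{ab}>0$ and $v_{ab}>0$, the exponential is strictly decreasing in $n$, so the condition $\exp(-n\Delta_{ab}^2/(2v_{ab}))\le\delta$ is equivalent to
\[
n\Delta_{ab}^2/(2v_{ab})\ge\ln(1/\delta).
\]
Rearranging gives exactly Eq.~\eqref{eq:samplesize}. Since the argument uses only monotonicity of the bound in $n$, the condition is sufficient rather than necessary. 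We take $0<\delta<1$ so that $\ln(1/\delta)>0$.

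\emph{Familywise control.} Let $E_m$ be the event that the $m$th prespecified claim $a_m\succ b_m$ is reversed. For each $m$, require $n$ to satisfy Eq.~\eqref{eq:samplesize} with $\delta/M$ in place of $\delta$, using that pair's $v_{a_mb_m}$ and $\Delta_{a_mb_m}$. Equivalently, take $n$ at least the maximum of these thresholds over $m$. The single-pair step then gives $\Pr(E_m)\le\delta/M$ for each $m$. Boole's inequality yields
\[
\Pr\Bigl(\bigcup_m E_m\Bigr)\le\sum_m\Pr(E_m)\le M\cdot\delta/M=\delta.
\]
No independence across pairs is needed, which matters because claims sharing a system have correlated errors.

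\emph{Where care is needed.} There is no real obstacle; the only points to watch are small. First, the claims must be prespecified, so that $M$ is fixed and not data-dependent, since otherwise the union bound does not apply. Second, the common $n$ must be the maximum of the per-pair thresholds. Third, each pair's sub-Gaussian hypothesis from the reversal theorem must hold, for example with the $v_{ab}$ built from independent coordinate errors as in the display preceding the corollary.
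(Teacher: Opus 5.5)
Your proposal is correct and follows the paper's implicit argument. You invert the sub-Gaussian reversal bound $\exp(-n\Delta_{ab}^2/(2v_{ab}))\le\delta$ to get the stated threshold on $n$, then apply Boole's inequality with $\delta/M$ per prespecified claim, taking the maximum per-pair threshold and noting that no independence across claims is needed.
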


The bound becomes uninformative near a tie, as it should. It also distinguishes repeated hardware measurement from independent task replication; correlated repetitions require an effective variance or a dependence-aware concentration result.

\section{Robust Selection and Regret}
Let $\mathcal S_B$ contain systems satisfying all hard constraints, such as $T_s\leq B$ and $P_s\geq P_{\min}$. Scalar ranking is applied only after this feasibility screen. For uncertain exponents in a compact convex set $\Theta\subset\mathbb R_{++}^3$, define
\begin{equation}
s_R\in\arg\max_{s\in\mathcal S_B}\min_{\theta\in\Theta}\theta^\top z_s.
\label{eq:robust}
\end{equation}

\begin{theorem}[Finite reduction and dominance safety]
If $\Theta$ is a polytope, the inner minimum in Eq.~\eqref{eq:robust} is attained at a vertex of $\Theta$. Moreover, no strictly Pareto-dominated system can be the unique robust maximizer when its dominator is feasible.
\end{theorem}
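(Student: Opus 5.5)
The plan has two parts, one for each claim in the theorem.

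\emph{Vertex attainment.} For a fixed system $s$, the map $\theta\mapsto\theta^\top z_s$ is linear. The set $\Theta$ is a polytope, so it is compact, convex, and equal to the convex hull of its finitely many vertices $v_1,\dots,v_K$. Any $\theta\in\Theta$ can be written as $\theta=\sum_k\lambda_k v_k$ with $\lambda_k\ge0$ and $\sum_k\lambda_k=1$. Linearity then gives
\[
\theta^\top z_s=\sum_k\lambda_k v_k^\top z_s\ge\min_k v_k^\top z_s .
\]
Hence the infimum over $\Theta$ equals the minimum over the vertices, and it is attained at some vertex. This is the standard fact that a linear program attains its optimum at an extreme point. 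As a consequence, Eq.~\eqref{eq:robust} reduces to maximizing $\min_k v_k^\top z_s$ over the finite set of feasible systems. This maximum exists whenever $\mathcal S_B$ is finite and nonempty, which I will state as the standing assumption.

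\emph{Dominance safety.} Let $a,b\in\mathcal S_B$ be feasible systems with $a$ strictly Pareto-dominating $b$, in the sense of Theorem~3. By the ranking-stability proposition, the vector $w=z_a-z_b$ has components $\ln(C_a/C_b)\ge0$, $\ln(P_a/P_b)\ge0$, and $-\ln(R_a/R_b)\ge0$. The last holds because $R$ is strictly increasing in $T$. At least one component is strictly positive. Because $\Theta\subset\mathbb R^3_{++}$, every $\theta\in\Theta$ satisfies $\theta^\top w>0$, so $\theta^\top z_a>\theta^\top z_b$ pointwise on $\Theta$. I then pass to the minimum:
\begin{itemize}
\item Let $\theta^\ast$ attain $\min_\Theta\theta^\top z_a$. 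Compactness guarantees it exists, or I can take it to be a vertex by the first part.
\item Then $\min_\Theta\theta^\top z_a=\theta^{\ast\top}z_a>\theta^{\ast\top}z_b\ge\min_\Theta\theta^\top z_b$.
\end{itemize}
So the robust value of $a$ is strictly larger than that of $b$. Therefore $b$ is not a maximizer at all, and in particular not the unique maximizer.

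The one step that needs care is the passage from a pointwise strict inequality to a strict inequality of minima. This is exactly where compactness, and hence attainment of the minimum, is required: without attainment, a strict pointwise inequality could become an equality in the infimum. In fact the argument gives the stronger conclusion that a dominated system is never a robust maximizer, not just never the unique one. I would note this, while keeping the theorem's weaker wording.
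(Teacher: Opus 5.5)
Your proposal is correct and follows the paper's argument. Linearity of $\theta\mapsto\theta^\top z_s$ gives vertex attainment, and coordinatewise dominance of $z_a$ over $z_b$ with $\Theta\subset\mathbb R^3_{++}$ gives $\theta^\top z_a>\theta^\top z_b$ for every $\theta\in\Theta$. The paper records only the weak inequality between worst-case scores, which is all the ``unique maximizer'' wording requires; you use attainment of the minimum to make it strict, so a dominated system is never a robust maximizer at all.
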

\begin{proof}
For fixed $s$, $\theta^\top z_s$ is linear in $\theta$, so its minimum over a compact polytope occurs at a vertex. If $a$ strictly dominates $b$, then $z_a\geq z_b$ coordinatewise with one strict component. Every $\theta\in\Theta$ is strictly positive, hence $\theta^\top z_a>\theta^\top z_b$ for every $\theta$. Thus the worst-case score of $b$ cannot strictly exceed that of $a$.
\end{proof}

\begin{theorem}[Plug-in selection regret]
Let $s^*=\arg\max_{s\in\mathcal S_B}\theta^{*\top}z_s$ and
$\widehat s=\arg\max_{s\in\mathcal S_B}\widehat\theta^\top z_s$. For any norm $\|\cdot\|$ and its dual $\|\cdot\|_*$, if $\max_s\|z_s\|\leq M$, then
\begin{equation}
0\leq \theta^{*\top}z_{s^*}-\theta^{*\top}z_{\widehat s}
\leq 2M\|\widehat\theta-\theta^*\|_*.
\label{eq:regret}
\end{equation}
\end{theorem}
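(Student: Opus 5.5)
The nonnegativity of the regret is immediate from the definition of $s^*$. Since $s^*$ maximizes $\theta^{*\top}z_s$ over $\mathcal S_B$ and $\widehat s\in\mathcal S_B$, we have $\theta^{*\top}z_{s^*}\geq\theta^{*\top}z_{\widehat s}$. For the upper bound I will use the standard add-and-subtract decomposition. Write the regret as
\[
\theta^{*\top}z_{s^*}-\theta^{*\top}z_{\widehat s}
=\bigl(\theta^{*}-\widehat\theta\bigr)^\top z_{s^*}
+\bigl(\widehat\theta^\top z_{s^*}-\widehat\theta^\top z_{\widehat s}\bigr)
+\bigl(\widehat\theta-\theta^{*}\bigr)^\top z_{\widehat s}.
\]
The middle bracket is nonpositive, because $\widehat s$ maximizes $\widehat\theta^\top z_s$ over the same feasible set and $s^*\in\mathcal S_B$. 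We may therefore drop it and bound the regret by the sum of the two outer terms.

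Each outer term is an inner product of a parameter error with an attribute vector. I will bound it by the generalized Hölder inequality $|w^\top z|\leq\|w\|_*\|z\|$, which holds by the definition of the dual norm $\|w\|_*=\sup_{\|z\|\leq1}w^\top z$. Applying this with $w=\pm(\widehat\theta-\theta^*)$ and the hypothesis $\|z_s\|\leq M$ for every $s$, each outer term is at most $M\|\widehat\theta-\theta^*\|_*$. Summing the two terms gives $2M\|\widehat\theta-\theta^*\|_*$, which is Eq.~\eqref{eq:regret}.

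No step here is a genuine obstacle. The one point needing care is bookkeeping about which of $z$ and $\theta$ carries the primal norm and which the dual. The statement puts $\|\cdot\|$ on $z_s$ and $\|\cdot\|_*$ on the parameter error, and the pairing inequality is symmetric in this sense, so the bound holds as stated. I would also note two minor points. Ties in either $\arg\max$ are harmless, since the argument uses only the maximizing property of the chosen elements. The bound is deterministic and requires neither positivity of $\widehat\theta$ nor the feasibility screen beyond the fact that both $s^*$ and $\widehat s$ lie in $\mathcal S_B$.
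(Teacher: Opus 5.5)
Your proof is correct and follows the paper's argument: you add and subtract $\widehat\theta^\top z_{s^*}$ and $\widehat\theta^\top z_{\widehat s}$, drop the middle term by optimality of $\widehat s$, and bound each outer term by $M\|\widehat\theta-\theta^*\|_*$ via H\"older's inequality. You also make explicit the nonnegativity from optimality of $s^*$, which the paper leaves implicit.
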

\begin{proof}
Add and subtract $\widehat\theta^\top z_{s^*}$ and $\widehat\theta^\top z_{\widehat s}$. The middle difference is nonpositive by optimality of $\widehat s$. H\"older's inequality bounds each remaining term by $M\|\widehat\theta-\theta^*\|_*$.
\end{proof}

\begin{corollary}[Joint measurement and parameter error]
If $|\widehat\theta^\top\widehat z_s-\theta^{*\top}z_s|\leq\varepsilon$ uniformly over feasible systems, plug-in selection has log-utility regret at most $2\varepsilon$.
\end{corollary}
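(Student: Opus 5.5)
The plan is to reduce the corollary to the plug-in regret theorem by comparing scores through the true utility. Write $U^*(s)=\theta^{*\top}z_s$ for the true log-utility and $\widehat U(s)=\widehat\theta^\top\widehat z_s$ for the estimated one. Plug-in selection now means $\widehat s\in\arg\max_{s\in\mathcal S_B}\widehat U(s)$, and the target is $s^*\in\arg\max_{s\in\mathcal S_B}U^*(s)$. The hypothesis is a single uniform bound $\sup_{s\in\mathcal S_B}|\widehat U(s)-U^*(s)|\leq\varepsilon$. It absorbs both error sources at once, so no Hölder step is needed.

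The argument is the same three-term telescoping used for Eq.~\eqref{eq:regret}:
\[
U^*(s^*)-U^*(\widehat s)=\bigl[U^*(s^*)-\widehat U(s^*)\bigr]+\bigl[\widehat U(s^*)-\widehat U(\widehat s)\bigr]+\bigl[\widehat U(\widehat s)-U^*(\widehat s)\bigr].
\]
Each bracket is handled separately:
\begin{enumerate}
\item The first and third brackets are each at most $\varepsilon$ by the uniform bound. This uses the fact that $s^*$ and $\widehat s$ both lie in the feasible set $\mathcal S_B$.
\item The middle bracket is nonpositive because $\widehat s$ maximizes $\widehat U$ over $\mathcal S_B$.
\end{enumerate}
This gives the upper bound $2\varepsilon$. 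The lower bound $0$ follows from optimality of $s^*$ for $U^*$.

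The only subtle point is the feasibility screen. If $\mathcal S_B$ is determined from noisy measurements, the estimated feasible set can differ from the true one, and the argument silently requires them to coincide. I would state explicitly that the screen is applied to true constraint values, or that it is the same set in both argmaxes. Under that reading the uniform bound only needs to hold on $\mathcal S_B$, and the proof is immediate. Ties in the argmax are harmless, since the bound holds for any selected maximizer.

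As a remark, I would note that $\varepsilon$ can be instantiated from earlier results. Write $\widehat\theta^\top\widehat z_s-\theta^{*\top}z_s=(\widehat\theta-\theta^*)^\top\widehat z_s+\theta^{*\top}(\widehat z_s-z_s)$. Hölder's inequality then gives $\varepsilon\leq M'\|\widehat\theta-\theta^*\|_*+\alpha\varepsilon_C+\beta\varepsilon_P+\gamma\varepsilon_R$, using the uncertainty envelope Eq.~\eqref{eq:uncertainty}, where $M'$ bounds $\|\widehat z_s\|$. This connects the corollary to both the regret theorem and the measurement-error proposition.
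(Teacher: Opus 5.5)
Your proof is correct and matches the paper's argument, which invokes the same empirical-maximization decomposition: the middle term is nonpositive by optimality of $\widehat s$, and the two outer uniform deviations are each at most $\varepsilon$. Your remarks about using the same feasible set $\mathcal S_B$ in both argmaxes and about the H\"older instantiation of $\varepsilon$ are sound, but the proof does not need them.
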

\begin{proof}The usual two-term empirical-maximization decomposition bounds regret by two uniform deviations.\end{proof}

\section{Cross-Task Aggregation}
For externally fixed $w_j\geq0$ with $\sum_jw_j=1$, define $I_{\rm geo}=\prod_jI_j^{w_j}$. Then
\[
\ln I_{\rm geo}=\alpha\sum_jw_j\ln C_j+\beta\sum_jw_j\ln P_j-\gamma\sum_jw_j\ln R_j.
\]
This rule is invariant to uniform replication of tasks and preserves common multiplicative improvements. Its weights remain normative. Arithmetic averaging encodes a different compensation rule.

\section{Formal Comparison with Alternative Separable Representations}
Write the beneficial resource attribute as $Z=R^{-1}$. CPR-IE is the weighted geometric representation $C^\alpha P^\beta Z^\gamma$, generalizing the QuIDE quantized-intelligence index \citep{jiang2026quide}. It places CPR-IE within multiattribute value theory, but its invariance assumptions differ from those of additive independence \citep{KeeneyRaiffa1976}.

The additive form
\begin{equation}
U_A=w_Cu_C(C)+w_Pu_P(P)+w_Zu_Z(Z)
\label{eq:additive}
\end{equation}
is appropriate when preferential independence supports additive separability. It is Pareto-monotone for positive weights and increasing component values, but generally violates CPR-IE ratio independence: the ratio $U_A(aC,P,Z)/U_A(C,P,Z)$ depends on $P$ and $Z$. Additive scores are also sensitive to arbitrary affine rescaling of their components unless weights are recalibrated.

The constant-elasticity-of-substitution family
\begin{equation}
U_{\rho}=\left(w_CC^\rho+w_PP^\rho+w_ZZ^\rho\right)^{1/\rho},
\qquad \rho\ne0,
\label{eq:ces}
\end{equation}
permits a tunable substitution elasticity. With positive weights summing to one,
\begin{equation}
\lim_{\rho\to0}U_\rho=C^{w_C}P^{w_P}Z^{w_Z}.
\end{equation}
Thus normalized CPR-IE is the Cobb--Douglas limit of CES when its exponents are interpreted as weights summing to one. Unnormalized exponents additionally control returns to joint scale. For $\rho\ne0$, CES rejects coordinatewise ratio independence because the response to scaling one coordinate depends on the other coordinates.

A translog representation,
\begin{equation}
\ln U_T=a_0+\sum_j a_j\ln x_j+\frac12\sum_j\sum_k b_{jk}\ln x_j\ln x_k,
\qquad x=(C,P,Z),
\label{eq:translog}
\end{equation}
nests the CPR-IE log form when every $b_{jk}=0$. Nonzero interaction coefficients allow the compression--\allowbreak prediction trade-off or resource penalty to vary by operating point. Equation~\eqref{eq:translog} is therefore a direct specification test: stable nonzero interactions are evidence against Axiom 5, not a refinement already implied by CPR-IE.

Noncompensatory rules such as $U_{\min}=\min\{\widetilde C,\widetilde P,\widetilde Z\}$ and constrained Pareto selection reject smooth substitution altogether. They are preferable when failure in one dimension cannot be offset by gains elsewhere. CPR-IE should consequently be applied after hard feasibility and safety constraints, not in place of them.

These representations cannot be ranked in the abstract. Additive utility assumes additive independence; CES assumes a particular global substitution structure; translog allows local interactions; minimum and Pareto rules limit compensation. CPR-IE is distinguished by multiplicative scale response and exact log-linearity. The empirical question is whether those restrictions are stable enough to improve out-of-sample decisions.

\section{Provenance and Non-Implication}
MDL formalizes model selection by description length \citep{Rissanen1978,Grunwald2007}. Kolmogorov complexity supplies an ideal shortest-program quantity, but is uncomputable and machine dependent up to an additive constant \citep{LiVitanyi2008}. These theories discipline the construction of $C$; they do not prescribe $C^\alpha$ or the joint index.

Proper scoring rules support principled evaluation of probabilistic predictions \citep{GneitingRaftery2007}. Variational inference relates the evidence lower bound to KL divergence under a specified model \citep{Blei2017}; the free-energy principle develops a broader generative account of perception and action \citep{Friston2010}. These results motivate choices for $P$ but do not imply Eq.~\eqref{eq:CPRIE}. Variational free energy is not thermodynamic free energy.

Landauer's analysis links logically irreversible operations to a minimum heat cost in an idealized setting \citep{Landauer1961}; reversible computing sharpens the distinction between logical and physical irreversibility \citep{Bennett1982}. This establishes that computation is physical, but neither equates accelerator energy with the Landauer limit nor implies logarithmic burden. Equation~\eqref{eq:B} follows from Axioms 1--3.

Universal intelligence integrates goal achievement across environments using algorithmic weights \citep{LeggHutter2007}. CPR-IE addresses a narrower question: efficient realization under a declared deployment protocol. It must not be presented as a universal intelligence measure.

\section{Logical Nonredundancy and Countermodels}
The original eight-axiom formulation contained two redundancies. Resource identity follows from composition by setting one increment to zero. Multiplicative consistency of each $\phi_j$ follows by applying the ratio-response equation twice. The six-axiom system above removes both assumptions.

The remaining restrictions have distinct roles. The following countermodels show what becomes unidentified when each structural requirement is removed. They are logical witnesses, not proposed empirical alternatives.

\begin{proposition}[Resource-side nonredundancy]
The composition and calibration restrictions cannot be deleted without enlarging the admissible class.
\end{proposition}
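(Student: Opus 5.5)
The plan is to exhibit explicit countermodels, one for each deleted axiom. Each countermodel satisfies the remaining Stage~I axioms but differs from Eq.~\eqref{eq:B}. That shows deleting the axiom strictly enlarges the admissible class beyond the single function singled out by the logarithmic-burden theorem. Only the two resource-side axioms named in the statement need treatment: Axiom~2 (proportional-increment composition) and Axiom~3 (local calibration).

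\emph{Deleting calibration.} Take $B_k(x)=k\ln(1+x)$ with any $k>0$, $k\neq1$. It is strictly increasing, so Axiom~1 holds. It satisfies Axiom~2 because
\[
\ln\bigl(1+x\oplus y\bigr)=\ln(1+x)+\ln(1+y).
\]
But $B_k'(0)=k\neq1$, so it violates Axiom~3 only. Hence without Axiom~3 the admissible class contains the whole one-parameter family $\{k\ln(1+x):k>0\}$. This matches the intermediate step in the proof of the logarithmic-burden theorem, where monotone additivity alone yields $k\ln(1+x)$. The point is that calibration is exactly what pins down $k$.

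\emph{Deleting composition.} Take $B(x)=x$, or more generally any strictly increasing differentiable $B$ with $B(0)=0$ and $B'(0)=1$, such as $\tanh x$. This satisfies Axioms~1 and~3. To check that it fails Axiom~2, I would evaluate at $x=y=1$:
\[
x\oplus y=3,\qquad B(3)=3\neq2=B(1)+B(1).
\]
Equivalently, I could note that $B(x)=x$ differs from $\ln(1+x)$ at $x=1$. Either way the resulting burden is admissible without Axiom~2 but is not logarithmic. For extra force, I could point out that this gives infinitely many distinct admissible $B$, and hence distinct burden coordinates $R$, which can change rankings.

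The only step I expect to need care is confirming that each countermodel satisfies every \emph{retained} axiom, including differentiability at zero and strict monotonicity on all of $[0,\infty)$. Both are immediate for the chosen examples. I might also remark that Axiom~1 is partly needed: without monotonicity, non-measurable additive solutions of Cauchy's equation exist. However, the statement only concerns composition and calibration, so I would keep that as a side remark.
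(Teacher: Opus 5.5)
Your proposal is correct and follows the paper's proof essentially verbatim: the paper uses the same witnesses, $B_k(x)=k\ln(1+x)$ with $k>0$ to show that calibration is what fixes the scale, and $B(x)=x$, which satisfies resource order and calibration but breaks composition whenever $xy>0$. One caution on your side remark: once Axiom~3 is kept, differentiability at zero already makes $H(z)=B(e^z-1)$ continuous at $0$ and hence linear, so non-measurable Cauchy solutions can appear only if calibration is also dropped (indeed Axioms~2 and~3 alone force $B(x)=\ln(1+x)$).
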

\begin{proof}
If proportional-increment composition is deleted, $B(x)=x$ is strictly increasing and satisfies $B'(0)=1$, but
$B(x\oplus y)=x+y+xy\ne x+y=B(x)+B(y)$ for $xy>0$. Hence the logarithm is no longer characterized. If local calibration is deleted, every $B_k(x)=k\ln(1+x)$ with $k>0$ satisfies resource order and composition, so the burden scale is unidentified. Resource order excludes $B_k$ with $k\leq0$ and, more generally, supplies the regularity that eliminates nonmonotone additive solutions after the logarithmic change of variables.
\end{proof}

\begin{proposition}[Aggregation-side nonredundancy]
Orientation, context-independent ratio response, and normalization impose separate restrictions.
\end{proposition}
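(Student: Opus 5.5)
The plan mirrors the proof of the resource-side nonredundancy proposition. For each of Axioms 4, 5, and 6 separately, I will exhibit a countermodel $I:(0,\infty)^3\to(0,\infty)$ that satisfies the other two axioms but falls outside the class of Eq.~\eqref{eq:aggregate}. Together with the converse half of Theorem~2, this shows that deleting any one axiom strictly enlarges the admissible class. Hence each axiom imposes a separate restriction.

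\emph{Normalization.} Take $I_A(C,P,R)=A\,C^{\alpha}P^{\beta}R^{-\gamma}$ with $A>0$, $A\neq1$. This function is continuous and strictly monotone in the required directions. Its coordinate ratios are $a^{\alpha}$, $b^{\beta}$ and $d^{-\gamma}$, which are independent of context. So Axioms 4 and 5 hold, but $I_A(1,1,1)=A\neq1$. Deleting Axiom 6 therefore leaves the multiplicative constant unidentified, in the same way that deleting local calibration left $k$ unidentified in $B_k$.

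\emph{Orientation.} Take $I(C,P,R)=C^{-1}PR^{-1}$, or more generally any exponent triple with a nonpositive component, for example $C^{0}P^{\beta}R^{-\gamma}$. This satisfies Axiom 5 with power $\phi_j$ and satisfies Axiom 6. However, it is decreasing, or constant, in $C$, so it violates strong orientation. It also lies outside Eq.~\eqref{eq:aggregate}, which requires $\alpha,\beta,\gamma>0$. Thus Lemma~2 alone yields only real exponents $a_j$, and Axiom 4 is what forces the signs.

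\emph{Ratio response.} This case is the main one and is where care is needed. I will use an additive separable score that is positive and normalized. One candidate is
\[
I(C,P,R)=\tfrac13\bigl(C+P+R^{-1}\bigr).
\]
It is continuous, strictly increasing in $C$ and $P$, strictly decreasing in $R$, and equals $1$ at $(1,1,1)$, so Axioms 4 and 6 hold. To refute Axiom 5, I will show that $I(2C,P,R)/I(C,P,R)=(2C+P+R^{-1})/(C+P+R^{-1})$ varies with $P$. For instance, at $C=R=1$ it tends to $2$ as $P\downarrow0$ and to $1$ as $P\to\infty$. Hence no function $\phi_C$ of $a$ alone can exist. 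This is the additive-utility failure already noted in the comparison section; a CES form with $\rho\neq0$ would serve equally well.

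The only subtle point, which I regard as the main obstacle, is that "admissible" arguments in Axiom 5 range over all of $(0,\infty)^3$. A single pair of contexts giving different ratios is therefore enough. No regularity issue arises, since every countermodel is smooth.
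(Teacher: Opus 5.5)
Your proposal is correct and follows the paper's proof essentially verbatim: your normalization and orientation witnesses ($A\,C^{\alpha}P^{\beta}R^{-\gamma}$ and $C^{-1}P/R$) are exactly the paper's, and your additive $\tfrac13(C+P+R^{-1})$ does the same job as the paper's $(C+P)/(2R)$. One arithmetic slip: at $C=R=1$ the ratio $(3+P)/(2+P)$ tends to $3/2$, not $2$, as $P\downarrow0$; this still differs from the limit $1$ as $P\to\infty$, so your conclusion stands.
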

\begin{proof}
If orientation is deleted, $I=C^{-1}P/R$ satisfies ratio response and normalization but rewards worse compression. If ratio independence is deleted, $I=(C+P)/(2R)$ is continuous, normalized, and correctly oriented, yet
\[
\frac{I(aC,P,R)}{I(C,P,R)}=\frac{aC+P}{C+P}
\]
depends on the background level $P$. It is therefore not CPR-IE. If normalization is deleted, $A C^\alpha P^\beta R^{-\gamma}$ is admissible for every $A>0$, so the score level is unidentified although rankings remain unchanged.
\end{proof}

The witnesses establish functional nonredundancy of composition, calibration, ratio independence, orientation, and normalization. They also clarify a limitation of the word ``independence'': the regularity and sign clauses grouped within resource order and strong orientation are composite assumptions. A fully atomized logical-independence analysis would split continuity, monotonicity, and sign orientation into separate axioms; doing so changes notation but not the representation.

\section{Falsifiability}

The proposal can fail at three levels. Measurement failure occurs when $C,P,T$ cannot be estimated reliably. Structural failure occurs when operational preferences show systematic interactions or reject proportional resource composition. Decision failure occurs when CPR-IE-guided choices underperform simpler metrics out of sample. Citation to established theory cannot repair any of these failures.

The axiomatic route can look circular: proportional-increment composition encodes logarithmic burden almost by definition, and ratio independence encodes the power form. The point of the axioms is not to derive a surprising conclusion from surprising premises, but to make the premises explicit and individually testable. Axiom 2 predicts that a given proportional increment adds a burden independent of the current load---the increment that doubles $1+x$ ($y=1$) adds exactly $\ln 2$ at every base level $x$---which data reject if measured burden instead grows superlinearly. Axiom 5 predicts that the proportional gain from doubling $C$ is the same whether $P$ is large or small; a stable interaction, in which compression matters more at low predictive quality, surfaces as a nonzero off-diagonal of the translog parent model and is direct evidence against the axiom. The characterization therefore does not assume these invariances for free; it converts them from implicit habits into explicit, refutable claims, and the alternative representations compared earlier are what one falls back to when they fail.

\section{Conclusion}
Separating resource representation from attribute aggregation gives CPR-IE a noncircular foundation. Proportional resource composition produces logarithmic burden; context-independent ratio response produces the multiplicative power form. The interaction parent model identifies precisely what the restriction excludes. Rank conditions separate cardinal from ordinal identification, concentration bounds quantify measurement-induced reversals, and robust optimization converts exponent uncertainty into a finite decision rule with a regret guarantee. These results strengthen CPR-IE as a theory of protocol-relative efficiency while leaving its practical value open to empirical rejection.

\bibliographystyle{unsrtnat}\bibliography{references}

@article{Rissanen1978,
  title     = {Modeling by shortest data description},
  author    = {Rissanen, Jorma},
  journal   = {Automatica},
  volume    = {14},
  number    = {5},
  pages     = {465--471},
  year      = {1978},
  publisher = {Elsevier}
}

@book{Grunwald2007,
  title     = {The minimum description length principle},
  author    = {Gr{\"u}nwald, Peter D},
  year      = {2007},
  publisher = {The MIT press}
}

@book{LiVitanyi2008,
  author    = {Li, Ming and Vit\'anyi, Paul},
  title     = {An Introduction to Kolmogorov Complexity and Its Applications},
  edition   = {3},
  publisher = {Springer},
  year      = {2008},
  doi       = {10.1007/978-0-387-49820-1}
}

@article{GneitingRaftery2007,
  title     = {Strictly proper scoring rules, prediction, and estimation},
  author    = {Gneiting, Tilmann and Raftery, Adrian E},
  journal   = {Journal of the American statistical Association},
  volume    = {102},
  number    = {477},
  pages     = {359--378},
  year      = {2007},
  publisher = {Taylor \& Francis}
}

@article{Blei2017,
  title     = {Variational inference: A review for statisticians},
  author    = {Blei, David M and Kucukelbir, Alp and McAuliffe, Jon D},
  journal   = {Journal of the American statistical Association},
  volume    = {112},
  number    = {518},
  pages     = {859--877},
  year      = {2017},
  publisher = {Taylor \& Francis}
}

@article{Friston2010,
  title     = {The free-energy principle: a unified brain theory?},
  author    = {Friston, Karl},
  journal   = {Nature reviews neuroscience},
  volume    = {11},
  number    = {2},
  pages     = {127--138},
  year      = {2010},
  publisher = {Nature Publishing Group UK London}
}

@article{Landauer1961,
  title     = {Irreversibility and heat generation in the computing process},
  author    = {Landauer, Rolf},
  journal   = {IBM journal of research and development},
  volume    = {5},
  number    = {3},
  pages     = {183--191},
  year      = {1961},
  publisher = {Ibm}
}

@article{Bennett1982,
  title     = {The thermodynamics of computation—a review},
  author    = {Bennett, Charles H},
  journal   = {International Journal of Theoretical Physics},
  volume    = {21},
  number    = {12},
  pages     = {905--940},
  year      = {1982},
  publisher = {Springer}
}

@article{LeggHutter2007,
  title     = {Universal intelligence: A definition of machine intelligence},
  author    = {Legg, Shane and Hutter, Marcus},
  journal   = {Minds and machines},
  volume    = {17},
  number    = {4},
  pages     = {391--444},
  year      = {2007},
  publisher = {Springer}
}

@book{Aczel1966,
  title     = {Lectures on Functional Equations and Their Applications},
  author    = {Acz{\'e}l, J{\'a}nos},
  year      = {1966},
  publisher = {Academic Press},
  isbn      = {9780120437504}
}

@book{KeeneyRaiffa1976,
  title     = {Decisions with Multiple Objectives: Preferences and Value Tradeoffs},
  author    = {Keeney, Ralph L. and Raiffa, Howard},
  year      = {1976},
  publisher = {John Wiley \& Sons},
  isbn      = {9780471465102}
}

@article{jiang2026quide,
  title   = {{QuIDE}: Mastering the Quantized Intelligence Trade-off via Active Optimization},
  author  = {Jiang, Xiantao},
  journal = {arXiv preprint arXiv:2605.10959},
  year    = {2026}
}
\end{document}